\newcommand{\subfiglabel}[2]{%
	\protected@write \@auxout {}{\string \newlabel {#1}{{\thefigure #2}{\thefigure}{\thefigure #2}{#1}{}} }%
	\hypertarget{#1}{}%
}
\newcommand{\changed}[1]{{#1}} 
\newcommand{\changedB}[1]{{#1}} 
\newcommand{\changedC}[1]{{#1}} 
\newcommand{\changedSB}[1]{{#1}} 
\newcommand{\changedXW}[1]{{#1}} 
\newcommand{\changedF}[1]{{#1}} 
\newcommand{\revisedXW}[1]{{#1}} 
\definecolor{darkblue}{rgb}{0.1,0.1,0.6}
\newcommand{\highlight}[1]{{#1}} 
\newcommand{\subsubsectionB}[1]{\subsubsection{#1}}
\newtheorem{lemma}{Lemma}
\title[Article Title]{Article Title}
\begin{document}


\title[Coordination-free Planning for Congestion Reduction]{Coordination-free Multi-robot Path Planning for Congestion Reduction Using Topological Reasoning} 



\author[1]{\fnm{Xiaolong} \sur{Wang}}\email{wangxiaolong0830@hotmail.com}

\author[1]{\fnm{Alp} \sur{Sahin}}\email{als421@lehigh.edu}

\author*[1]{\fnm{Subhrajit} \sur{Bhattacharya}}\email{sub216@lehigh.edu}

\affil*[1]{ 
	\orgdiv{Department of Mechanical Engineering and Mechanic}, \orgname{Lehigh University}, \orgaddress{\street{19 Memorial Drive West}, \city{Bethlehem}, \postcode{18015}, \state{PA}, \country{U.S.A.}}}



\abstract{
    \changedSB{We consider the problem of multi-robot path planning in a complex, cluttered environment with the aim of reducing overall congestion in the environment, while avoiding any inter-robot communication or coordination. 
    Such limitations may exist due to lack of communication or due to privacy restrictions (for example, autonomous vehicles may not want to share their locations or intents with other vehicles or even to a central server).
    \changedB{The key insight that allows us to solve this problem is to stochastically distribute the robots across} different \changedB{routes in the environment}
    by assigning them paths in different topologically distinct classes, 
    \changedSB{so as to} lower congestion and the \changedB{overall travel time for all robots in the environment}.
    We outline the computation of topologically distinct paths in a spatio-temporal configuration space and propose methods for the stochastic assignment of paths to the robots.
    A fast replanning algorithm and a potential field based controller allow robots to avoid collision with nearby agents while following the assigned path.
    Our simulation and experiment results show a significant advantage over shortest path following under such a coordination-free setup.
    \footnote{
    	This version of the article has been accepted for publication, after peer review but is not the Version of Record and does not reflect post-acceptance improvements, or any corrections. The Version of Record is published in Journal of Intelligent \& Robotic Systems, and is available online at: \url{http://dx.doi.org/10.1007/s10846-023-01878-3}.
    }
}

}

\keywords{
Multi-Robot Motion Planning, Topological Path Planning, Privacy-aware Planning
}

\maketitle

\section{Introduction}

\subsection{Motivation and Problem Description}

\revisedXW{Autonomous vehicles are expected to travel in urban environments in the future for increased safety and overall efficiency. They could be of different car brands running different navigation \& communication systems that do not share their route-choosing processes or travel data, either due to lack of communication or due to privacy restrictions.
To avoid traffic congestion, such as those caused by non-cooperating human drivers nowadays, independent autonomous vehicles need to have a method for distributing traffic in the environment without communication. Motivated by this real-world scenario, in this paper}
we consider the problem of path planning for a large number of privacy-aware robots in a complex, cluttered \changedF{indoor or urban} environment with uncertainties (other unpredictable agents such as pedestrians), where the robots need to be well-distributed throughout the environment and avoid congestion in any region, but are not allowed to communicate or share their location data or intents with other robots. 
This is relevant to {avoiding congestion in distributed vehicle routing problems when a vehicle's location/intent cannot be shared \revisedXW{either due to lack of communication or} 
due to privacy restrictions}. We address the fundamental question of 

 \setcounter{figure}{0}
\setcounter{subfigure}{0}
\setlength{\columnsep}{5pt}%
\begin{wrapfigure}{r}{0.3\textwidth}
	\vspace{-2em}
	\begin{center}
		\includegraphics[width=0.3\textwidth]{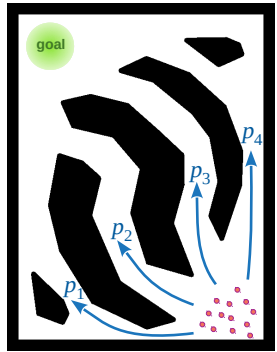}
	\end{center}
	\caption{Key idea: Make each robot stochastically choose from a set of topologically distinct paths in the environment.} \label{pic_insight}
	\vspace{-4em}
\end{wrapfigure}
\noindent
path planning under such circumstances without any inter-robot coordination, while trying to minimize the overall congestion in the environment.
We assume that each
robot knows the map of the environment and can localize itself in it, \revisedXW{but do not know the location or intent of the other robots}.
Furthermore, a robot can detect other agents in its immediate neighborhood (for example using on-board cameras or laser range sensors) so as to be able to avoid \revisedXW{immediate} collisions with them, although they cannot broadcast or communicate any information with each other.
\revisedXW{The key insight that allows us to solve this problem is to make each robot stochastically choose from a set of different \changedB{routes in the environment}
    which correspond to paths in different topologically distinct classes (Figure~\ref{pic_insight}), 
    \changedSB{so as to} lower congestion and the \changedB{overall travel time for all robots in the environment}.}


\subsection{Literature Review}

\noindent\changedF{\emph{MAPF: }}
\changedSB{
	\changedXW{Multi-agent path finding (MAPF) is a well-studied problem with a variety of practical applications. 
		\changedF{Given a set of robots and their start locations, the objective of MAPF is}
		to find a set of paths that lead them to their corresponding goals without collision, while minimize the sum of travel time of each agent. Early studies \citep{silver2005cooperative} \changedF{focused on computing} valid collision-free \changedF{solutions}, while recent method \citep{sharon2015conflict} and its many variants \changedF{have strived to compute} optimal solutions. \highlight{They all focus on conflict \changedF{resolution} among the agents' path choices so that \changedF{an} optimal or near-optimal solution can be achieved.}}
	\changedXW{\changedF{Furthermore,} existing algorithms \citep{han2020ddm,wagner2011m} are designed to deal with
		\changedF{agents that are well-}distributed across the environment \highlight{and not have similar start/goal locations. \revisedXW{For example, in ~\cite{han2020ddm}, in all presented results, the initial robot positions are chosen to be distributed uniformly throughout the environment.} For a large group of robots with similar start and goal points, the computation involved in existing MAPF methods will increase drastically due to increased collision avoidance computations, and the collision/congestion avoidance still happens at a local level}.
	}
	
	MAPF in dynamic, cluttered and uncertain environments is also well-studied in robotics~\citep{MOHANAN2018171,Kushleyev:09,2020arXiv200614195C,Hasan:18}.}
Such dynamic environments exist in presence of pedestrians \changedSB{and other robots} in busy indoor or urban environments. Robots employed in such environments need to arrive at designated target locations while avoiding both static and dynamic obstacles such as pedestrians, which is usually unknown to the path planner at the first place.
\revisedXW{The dynamic nature of the environment in all these existing work, however (see the review paper~\cite{2020arXiv200614195C} for example), is assumed to arises from completely unpredictable agents (both human pedestrians and other robots), and hence these work focus primarily on improving the short-term prediction of the behavior of such agents without consideration for long-term congestion reduction. We, on the other hand, consider the problem where the robots in the environment use the same stochastic algorithm that, even without inter-agent coordination or communication, results in overall reduction of congestion in the environment in the long-term.}

\revisedXW{Along similar lines, some studies~\citep{ziebart2009planning,kretzschmar2016socially} have focused more on pedestrian's trajectory prediction or social compliance with humans that facilitate obstacle avoidance in a more human-friendly way, but is still at a local level and over short time horizons. Over longer time periods, when multi-robot groups run on a large complicated map cluttered by dynamic obstacles, this does not help proactively avoid robot congestion in the long run if the robots' routes are not well-distributed across the environment. In this paper we focus on developing algorithms for individual robots that, even without inter-robot communication or coordination, tries to reduce overall congestion in the environment by stochastically distributing the robots over different routes.}

While the presence of uncertain agents such as pedestrians in such planning problems have been considered~\citep{kirby2009companion, sisbot2007human,ferrer2013social,shiomi2014towards}, the problem of reducing robot traffic congestion by taking advantage of the structure of the environment without explicit inter-robot coordination remains open.

\vspace{0.5em}
\noindent\changedF{\emph{Congestion Avoidance in MAPF: }}
\changedSB{Both with and without uncertainties, existing literature uses inter-robot coordination for global planning to avoid congestion~\citep{atzmon2020probabilistic,santos2011hierarchical,toutouh2018swarm,ANTONIOU20131167}, or use information stored in the environment (represented as a network) as a means of indirect coordination between the agents~\citep{1706747,gunes2002ara,di2005anthocnet}.
	\revisedXW{A recent work on congestion-aware policy synthesis \citep{street2021congestion} is notable, and \changedF{tries to achieve a balance between} congestion \changedF{reduction} and \changedF{minimization of detours by designing single-robot automata}. \changedF{However, even in this approach, significant centralized inter-robot coordination (using a shared \emph{probabilistic reservation table}) is necessary. 
 This assumption of inter-robot coordination is fundamentally different from the premise of our current work, where we assume that there exists no inter-agent communication or coordination, and robots do not share their plan or intent.
Furthermore, \citep{street2021congestion} does not consider out-of-system agents (unpredictable agents such as pedestrians), which we do in our current work.
 }}
	
	\vspace{0.5em}
	\noindent\changedF{\emph{Other Related Multi-robot Coordination Problems: }}
	Most multi-robot persistent patrolling/surveillance methods use some centralized coordination~\citep{Stump:11:surveillance,Leahy:Surveillance:16,Kusnur_muav_19,Thakur-2013-109538}.
	\changedXW{In a inter-robot communication denied situation, local methods \highlight{(\emph{e.g.}, repulsive force or velocity obstacle \citep{van2008reciprocal})} or other decentralized framework \highlight{(using topological braids \citep{mavrogiannis2019multi} \revisedXW{or rotations \citep{mavrogiannis2022winding}})} can only help avoid collision but not avoid congestion proactively.
		\changedF{In those methods, without inter-robot coordination, congestion could readily happen.}
	}
	Action strategy planning for robots minimizing expected cost, 
	while a well-researched area (and often addressed using reinforcement learning or game-theoretic methods~\citep{lavalle2000robot,he2020integral}), 
	mostly focuses on local actions, do not take global topology of the environment into account, rely on shared information, or do not scale with the number of robots \revisedXW{(for example, \cite{he2020integral} presents results with only three robots)}.
}

\subsection{Solution Overview and Paper Outline}
\revisedXW{The technical problem considered in this paper can be summarized as follows:}

\vspace{0.5em}
\begin{minipage}{0.9\textwidth}
    \revisedXW{\textbf{Problem Statement:} Given a discrete graph representation of an environment, how can a large number of privacy-aware robots with local sensing plan their respective paths in the graph without inter-robot communication or centralized coordination (\emph{i.e.}, without sharing their location or intents) so as to minimize the overall congestion in the environment.}
\end{minipage}
\vspace{0.5em}

\noindent
If all robots choose the same/similar paths (\emph{e.g.}, shortest paths), 
certain regions of the environment will inevitably be traversed more. \changedF{This issue is even more aggravated when the robots or groups of robots have similar start and goal locations}.
Instead, a robot stochastically chooses between different topologically distinct classes of solutions with an aim of reducing overall congestion in the environment and to altruistically minimize overall travel time for any robot.
\revisedXW{We leverage the topological path planning methods introduced in the author's prior works \citep{planning:AURO:12,Homotopy-Planning:journal:18,Wang:Cable-Controlled:2018,DARS:14:HRI,Persistence-Plnning:TRO:15,ICRA:20:path:deconfliction}
for computing the topologically distinct paths in the environment.

\vspace{0.2em}\noindent
\textbf{Contributions:}
The main new contributions of the paper are:
\begin{enumerate}
    \item Formulating an optimization problem for coordination/communication-free computation of probability values using which a robot stochastically chooses a path out of the available topologically distinct choices.
    \item Develop efficient computation methods for solving the said optimization problem by employing simplifications.
    \item Design the controller used by each non-holonomic robot to follow their chosen path while locally avoiding collisions with other agents in its immediate neighborhood. This requires the development of a fast re-planner that takes the topological class into account.
    \item Validate the proposed method in simulations and real robot experiments.
\end{enumerate}
}




\vspace{0.5em}\noindent
\revisedXW{\textbf{Paper Outline:}}
In Section~\ref{sec:top-planning} we provide background on the computation of paths in topologically distinct classes using discrete search in $\mathbb{Z}_2$-homology augmented graphs.
\revisedXW{The main contributions of the paper appear in} Section~\ref{sec:path-combination-evaluation}, where we describe the coordination-free computation 
of the probability values using which a robot stochastically chooses a path out of the available topologically distinct choices.
This includes estimation of traffic density in the environment (Section~\ref{sec:traffic-density}), which is used 

\setlength{\columnsep}{5pt}%
\begin{wrapfigure}{r}{0.58\textwidth}
\vspace{-0.5em}
  \begin{center}
    \includegraphics[width=0.58\textwidth]{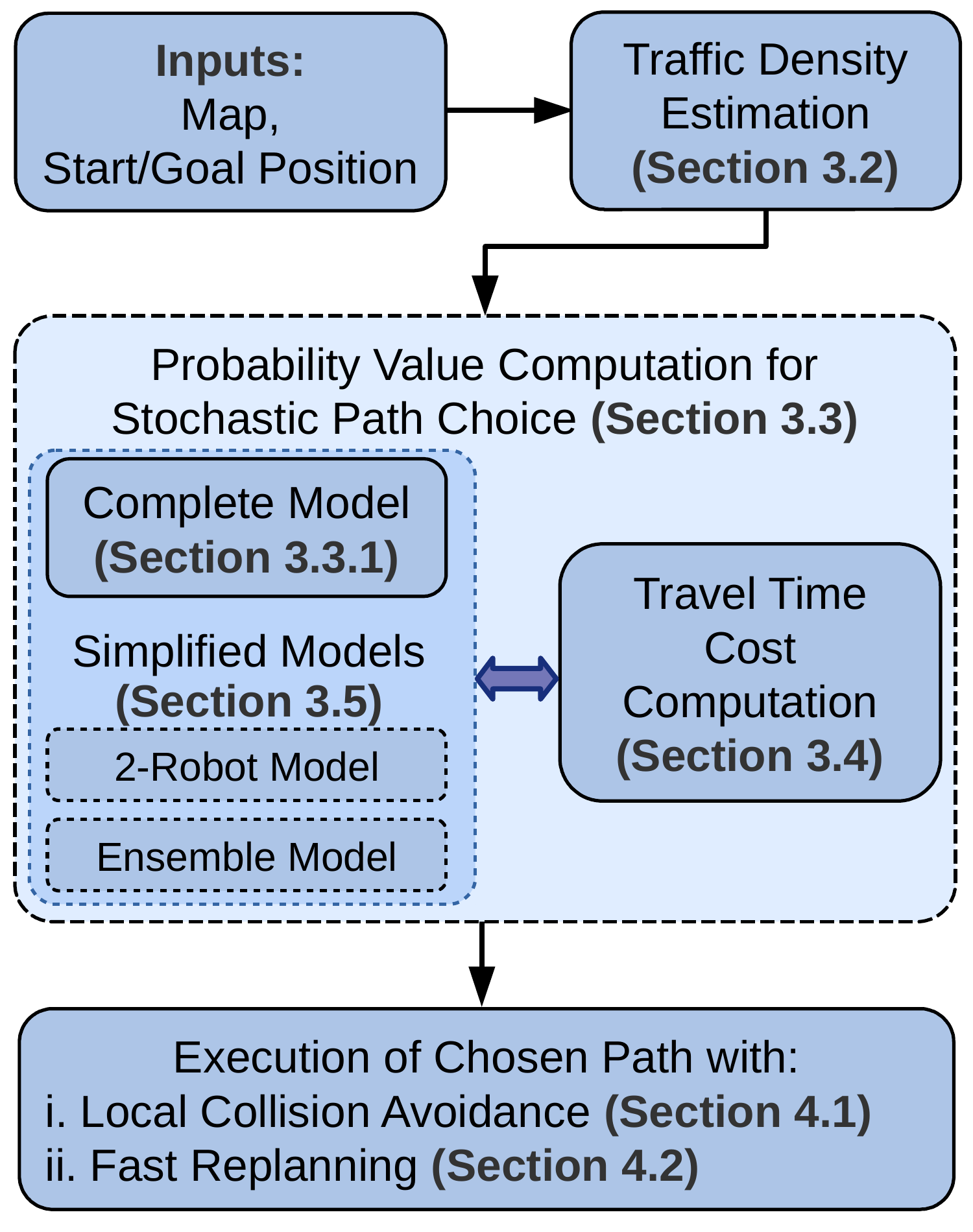}
  \end{center}
  \caption{\revisedXW{Workflow of the proposed coordination-free planning. Each robot follows this sequence of algorithms for computing and executing its own path without any inter-robot coordination or communication.}} \label{pic_pipeline}
  \vspace{-2em}
\end{wrapfigure}
\noindent
in the evaluation of the cost of path choices (Section~\ref{travel_cost_estimation}), which in turn is used in the computation of the probability values through an optimization process (Section~\ref{sec:probability-computation}). We also provide multiple appromixations and simplifications to the optimization problem for fast and efficient computation (Section~\ref{sec:simplified-model}).
A robot stochastically chooses 
a path using the computed probability values.
We call a chosen path the \emph{reference path}
of the robot, and once chosen, a robot commits to it.
Section~\ref{sec:implementation} describes the controller used by each non-holonomic robot to follow the reference path while locally avoiding collisions with other agents in its immediate neighborhood.
This includes a prediction of the future position of agents in the immediate neighborhood (Section~\ref{sec:probability-prediction}) and fast re-planning of path to avoid collision (Section~\ref{sec:cost-heuristic}).
Section~\ref{sec:results} provides simulation results and results from real-robot experiments.
%

The overall algorithm that each robot follows in computing and executing their respective paths is shown in the pipeline diagram of Figure~\ref{pic_pipeline}. The relevnat section numbers containing the details of each of the algorithmic components are also shown in the figure.




\section{Preliminaries -- Topological Planning and Rationalized Discretization in Spatio-Temporal Domain}
\label{sec:top-planning}

\changedSB{In this section we provide brief background on \emph{topological path planning} that allows the computation of shortest path in different topological classes using a graph search-based approach.
For further details on topological path planning the reader can refer to the author's prior work~\citep{planning:AURO:12,Homotopy-Planning:journal:18,Wang:Cable-Controlled:2018,DARS:14:HRI}. 
The type of topological classes that we consider in particular is the $\mathbb{Z}_2$ homology class~\citep{Persistence-Plnning:TRO:15,ICRA:20:path:deconfliction}, and we describe the path planning in spatio-temporal domain in order to account for dynamic agents during replanning.}

\subsection{Background: $\mathbb{Z}_2$ Homology and $H_2$-signature}


Two trajectories connecting the same start and goal points on a planar domain are said to be in the same homology class if the closed loop formed by the two trajectories 
forms the oriented boundary of a two-dimensional obstacle-free region~\citep{planning:AURO:12}.
The homology class of a loop can be quantified by winding numbers around the connected components of obstacles.
%
%
In order to prevent the separate counting of the homology classes that loop around an obstacle multiple times, one can compute the homology in the ``mod $2$'' coefficient~\citep{Persistence-Plnning:TRO:15,ICRA:20:path:deconfliction}.
\changedF{Doing so identifies all the even winding numbers to $0$ and all the odd winding numbers to $1$. 
This prevents
the creation of separate homology classes for loops that wind around obstacles multiple times (Figure~\ref{pic:z2homology}) and we refer to this homology as $\mathbb{Z}_2$-homology.}

In order to quantitatively identify and represent $\mathbb{Z}_2$-homology class of trajectories \changedF{in a planar domain, $\mathcal{C} = \mathbb{R}^2 - \mathcal{O}$ (where $\mathcal{O} \subset \mathbb{R}^2$ is the obstacle set)}, 
we construct a {\emph{homology invariant}} called \emph{$H_2$-signature}, 
\changedF{which is a function on the space of curves in $\mathcal{C}$ that uniquely identifies a curve's $\mathbb{Z}_2$-homology class. This \changedF{computation and the associated} construction (see \citep{cable:separation:IJRR:14,Persistence-Plnning:TRO:15} for more details) can be summarized as follows:}
In an environment with $o$ connected components of obstacles
we place a \emph{representative point}, $\zeta_i$, on the $i^\text{th}$ connected component and construct non-intersecting rays, $\{r_i\}_{i=1,2\cdots,o}$, emanating from the representative points.
\changedF{The $H$-signature of a curve, $\tau$, is then given by a vector of integers, $H(\tau) = [h_1, h_2,\cdots,h_o]\in\mathbb{Z}^o$, where $h_i$ is the winding number around the $i^{\text{th}}$ obstacle and can be computed by the number of times the curve intersects the ray emanating from $\zeta_i$ (in counting the number of intersections, crossing from left to right is considered positive and that from right to left is considered negative).
Subsequently, the $H_2$-signature of $\tau$ is defined as $H_2(\tau) = [h_1, h_2,\cdots,h_o] \mod 2 \in\mathbb{Z}_2^o$, in which the $i^{\text{th}}$ element assumes value in $\mathbb{Z}_2 = \mathbb{Z} / 2 \mathbb{Z} = \{0,1\}$ and gives the parity of the number of times the curve intersects the ray emanating from $\zeta_i$.
This computation is further illustrated in Figure~\ref{pic:z2homology}.
}

\begin{figure}
	\centering
	\includegraphics[width=0.95\columnwidth, trim=0 0 0 0, clip=true]{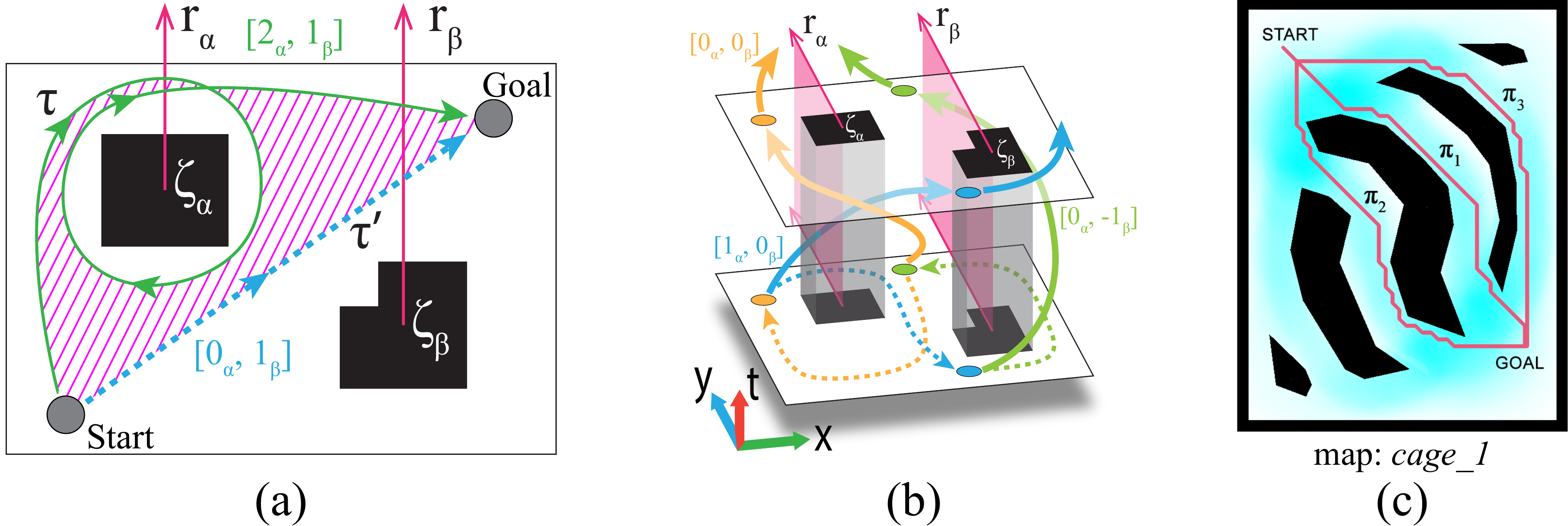}
	\caption{Topological Planning in $H_2$-augmented graph:
		\textbf{(a)} Homology classes of trajectories: $\tau$ and $\tau'$ are in different classes in regular homology \changedF{($H(\tau) = [2,1], H(\tau') = [0,1]$)}, but map to the same class in $\mathbb{Z}_2$-coefficient homology \changedF{($H_2(\tau) = H_2(\tau') = [0,1]$), where the elements of the vector are reduced `$\!\!\mod 2$'}. As a consequence, in computing paths in different homology classes, searching in the $H_2$-augmented graph does not return paths that loop around obstacles multiple times.
		\textbf{(b)} $H_2$-signature in a spatio-temporal configuration space.
		\textbf{(c)} Three paths in different $\mathbb{Z}_2$-homology classes (colored in pink) from the start to the goal in map ``\emph{cage\_1}'' computed using search in the $H_2$-augmented graph $\mathcal{G}_{H_2}$. Their \emph{base costs} (travel time cost computed by the A* search) are $\mathscr{C}_B(\pi_1)\!=\!8.803$, $\mathscr{C}_B(\pi_2)\!=\!9.803$, $\mathscr{C}_B(\pi_3)\!=\!10.603$. Cyan shade shows the estimated traffic density, $\rho$ \changedF{(described in Section~\ref{sec:traffic-density})}.
	}
	\subfiglabel{pic:z2homology}{(a)}
	\subfiglabel{pic_Haugmentedspace}{(b)}
	\subfiglabel{pic_classes}{(c)}
\end{figure}

\subsection{\changedB{Topological} Planning in $H_2$-Augmented Graph In Spatio-Temporal Domain}
\label{h2-aug-graph}
\changedB{The configuration space of each of the robots in our case is a discrete representation of the spatio-temporal domain. 
For a single robot, it is represented by the graph, $\mathcal{G} = (\mathcal{V},\mathcal{E})$, so that a vertex $\mathbf{v}\in \mathcal{V}$ is represented by $\mathbf{v} = (x,y,t)$, and edges connect neighboring vertices.
In order to keep track of the $\mathbb{Z}_2$ homology invariants, we define an $H_2$-augmented graph \changedF{(see \citep{planning:AURO:12,cable:separation:IJRR:14,Persistence-Plnning:TRO:15} for detailed construction)}, $\mathcal{G}_{H_2} = (\mathcal{V}_{H_2},\mathcal{E}_{H_2})$, based on the graph $\mathcal{G}$, 
such that a vertex in it 
is represented as $(\mathbf{v},\mathfrak{H})\in \mathcal{V}_{H_2}$, which contains the additional information of the $H_2$-signature, $\mathfrak{H}$, of the trajectory leading from a start vertex $\mathbf{v}_s \in \mathcal{V}$ up to the vertex $\mathbf{v}$. \changedSB{An edge connecting} vertex $(\mathbf{v},\mathfrak{H})$ to vertex $(\mathbf{v}',\mathfrak{H}')$ 
exists if 
$(\mathbf{v},\mathbf{v}')\in \mathcal{E}$, and 
$\mathfrak{H}'$
is the \changedB{sum} of $\mathfrak{H}$ and the $H_2$-signature of \changedB{the edge connecting} $\mathbf{v}$ to $\mathbf{v}'$.

In the spatio-temporal setup the \emph{rays} 
\changedF{in Figure \ref{pic:z2homology}}
emanating from obstacles
are \emph{extruded} in the temporal direction to construct half-planes (Figure~\ref{pic_Haugmentedspace}) and the intersections of trajectories are counted with these half-planes for computing the $H_2$-signature.
Thus in this case the $H_2$-signature computation for a path in the X-Y-T space requires counting the number of intersections of the path with each of those half-planes.

}
\changedB{Formally, given the graph, $\mathcal{G}$ and a start vertex $\mathbf{v}_s\in \mathcal{V}$, the vertex set, edge set and the cost function of the $H_2$-augmented graph can be described as:
}
\begin{enumerate}
	\item $(\mathbf{v},\mathbf{0}) \in \mathcal{V}_{H_2}$ is the start vertex, and the vertex set is given by:~
	\[\mathcal{V}_{H_2}=\left\{
	(\mathbf{v},\mathfrak{H})
	\left|
	\begin{aligned}
	&\mathbf{v}\in\mathcal{V},\text{ and}, \mathfrak{H} = H_2(\widetilde{\mathbf{v}_s\mathbf{v}})\\
	&\text{ for some trajectory } \widetilde{\mathbf{v}_s\mathbf{v}}\\
	&\text{from the start vertex }\mathbf{v}_s\text{ to }\mathbf{v} 
	\end{aligned}
	\right.
	\right\}
	\]
	\item An edge $\{(\mathbf{v},\mathfrak{H})\to (\mathbf{v}',\mathfrak{H}')\}$ is in $\mathcal{E}_{H_2}$ 
	iff $(\mathbf{v}\to\mathbf{v}')\in\mathcal{E}$ and \changedB{$\mathfrak{H}'= \left( \mathfrak{H}+ H_2(\mathbf{v}\to\mathbf{v}') \right) \!\!\mod 2$, where, $H_2(\mathbf{v}\to\mathbf{v}')$ is the $H_2$-signature of the directed segment representing the edge $(\mathbf{v}\to\mathbf{v}')$, and} ``$+$" is vector addition.
	\item The cost associated with an edge, \changedB{$C_{\mathcal{G}_{H_2}}\left( \{(\mathbf{v},\mathfrak{H})\to (\mathbf{v}',\mathfrak{H}') \} \right)$ is the same as the cost of the projected edge in, $\mathcal{G}$, \emph{i.e.}, $C_{\mathcal{G}}\left( \{\mathbf{v}\to\mathbf{v}'\} \right)$. The cost function is described in more details in Section~\ref{sec:cost-heuristic-initial-plan}.}
\end{enumerate}
\changedB{
Searching in this $H_2$-augmented graph from the start vertex $(\mathbf{v}_s,\mathbf{0})$ using A* search~\cite{Choset_2005}, paths to vertices of the form $(\mathbf{v}_g, *)$ give paths in different $\mathbb{Z}_2$ homology classes connecting $\mathbf{v}_s$ and $\mathbf{v}_g$ (Figure~\ref{pic_classes}).}
The vertices are generated on-the-fly and as required during the execution of \textit{A*} search on the graph.
\changedC{The paths obtained in the different homology classes are in ascending order of the cost.
}

\subsection{\changedC{Rationalized Discretization of the Spatio-Temporal Configuration Space for Constructing $\mathcal{G}$}} \label{sec:discretization}

\changedSB{In order to construct the configuration graph, $\mathcal{G}$, 
we discretize the spatio-temporal domain into a grid and place a vertex in every discrete cell in the grid.
The time axis is discretized uniformly with the time layers separated by $\delta t$ (Figure \ref{pic_successors_2}).
%
%
In the initial planning for the reference path we assume a constant robot speed, $V_{\text{max}}$, which is difficult to achieve with a uniform spatial discretization of $\delta r = V_{\text{max}} \delta t$ along X and Y directions (since diagonal edges \changedF{of an uniform square grid discreitization} are longer than the edges parallel to the coordinate axes). Instead, we discretize the spatial directions (both X and Y) using intervals of $\delta r' = \frac{\delta r}{4}$ and establish edges connecting a vertex $(x,y,t) \in \mathcal{V}$ with vertices of the form $(x \pm 3\delta r',y \pm 3\delta r', t+\delta t)$, $(x \pm 4\delta r', y, t+\delta t)$ and $(x, y \pm 4\delta r', t+\delta t)$ \changedF{(see Figure~\ref{pic_successors_1})}. In doing so, the spatial length of edges parallel to X or Y axes are $\delta r$, while the spatial length of the diagonal edges are $\sqrt{2}\frac{3}{4} \delta r = 1.06\cdots \delta r \approx \delta r$ (Figure~\ref{pic_successors_1}), thus allowing the implementation of almost-isotropic (direction-independent) velocity of $V_{\text{max}}$ along the edges.
We call this the ``\emph{Rationalized Discretization}''.
}

\begin{figure}
	\centering
	\includegraphics[width=0.9\columnwidth, trim=0 0 0 0, clip=true]{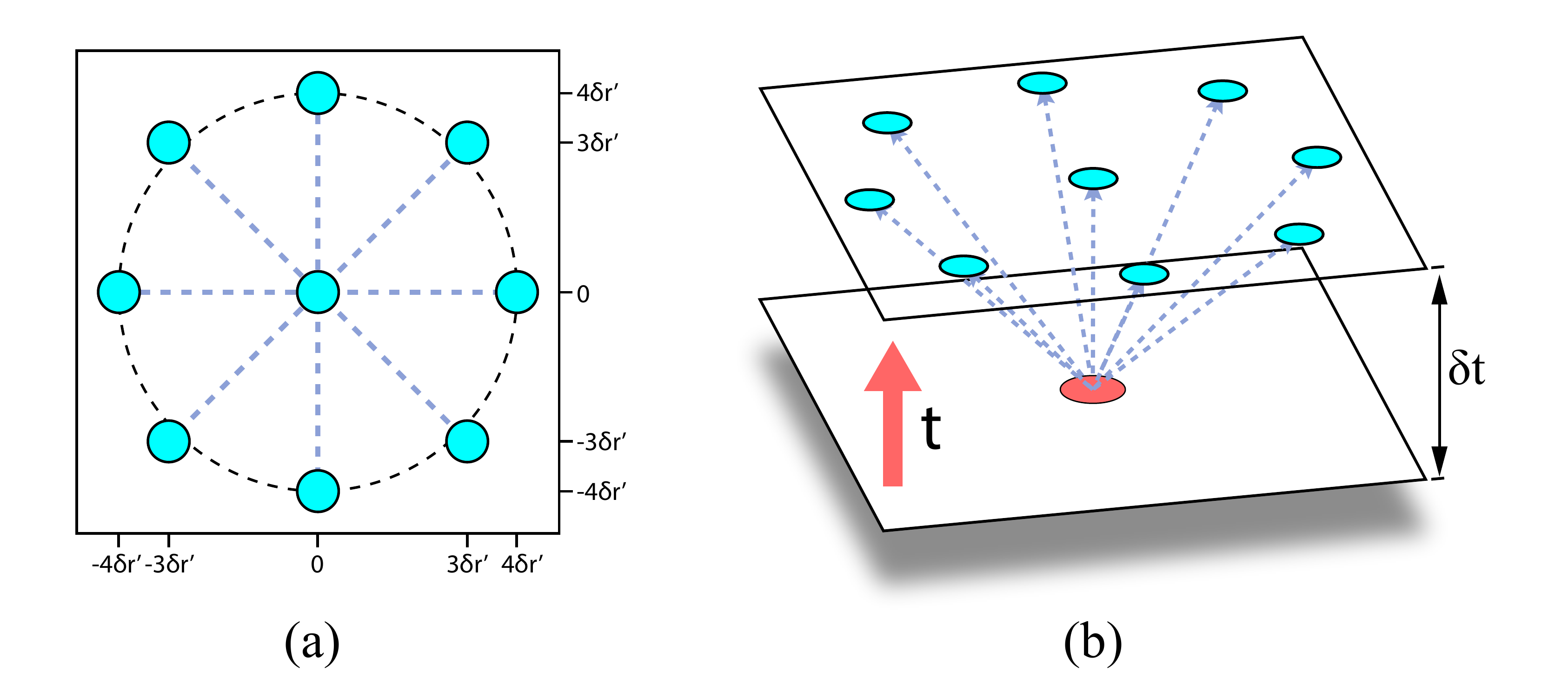}
	\caption{Successors (cyan) of a vertex $(x,y,t)\in \mathcal{V}$ (in red) using the \emph{rational discretization}. \changedF{Note how, although the successors with different spatial coordinates do not exactly fall on a circle of radius $V_{\text{max}} \delta t$, they do approximate the circle reasonably well due to the rationlized discretization.}}
\subfiglabel{pic_successors_1}{(a)}
\subfiglabel{pic_successors_2}{(b)}
\end{figure}

\changedB{
\subsection{Cost Function and Heuristic Function \changedF{for A* Search for Computing Paths in Different Topological Classes}} \label{sec:cost-heuristic-initial-plan}

For the topological planning of the paths in different $\mathbb{Z}_2$-homology classes for a particular robot,}
the cost function \changed{accounts for} the \changed{traversal time for the edges.}
%
\changedC{
The cost of an edge connecting vertices $(x,y,t, \mathfrak{h})$ and $(x',y', t+ \delta t, \mathfrak{h}')$ in $\mathcal{E}_{H_2}$ is thus chosen to be $\sqrt{ (x'-x)^2 + (y'-y)^2}/V_\text{max}$.}
\changedC{From a vertex $(x,y,t,\mathfrak{h})$ it will take at least $T_g = \frac{\sqrt{(x_g-x)^2 + (y_g-y)^2}}{V_\text{max}}$ time to reach a goal vertex of the form $(x_g, y_g, *, *)$ using any $\mathbb{Z}_2$ homology class, which is used as the heuristic function for A* search in $\mathcal{G}_{H_2}$.}
\changedF{An A* search in this $H_2$-augmented graph is used to find paths to vertices of the form $(x_g, y_g, *, *)$, which returns paths in increasing order of cost, and the cost of a path, $\pi$, is denoted by $\mathscr{C}_B(\pi)$, which is referred to as the \emph{base travel cost}.}


\section{\changedSB{Stochastic Topological Path Assignment} for \changedSB{Coordination-free} Multi-robot System}
\label{sec:path-combination-evaluation}

\changedF{In this section we describe the algorithm that a robot uses for computing the probability values with which it stochastically assigns itself to one of the topologically distinct paths that it has computed. The main consideration in designing the algorithm is that the robots are not allowed to communicate or coordinate among themselves in either the computation of the probability values or in choosing their own path assignments. A robot does not share its location or its intent/choice with other robots. In the next section we start with a brief description of the problem setup and introduce some terminologies.}

\subsection{\changedF{The Environment and a Robot's Mental Model of the Environment}}

\begin{SCfigure}[2][h]
\centering 
\includegraphics[width=0.6\columnwidth]{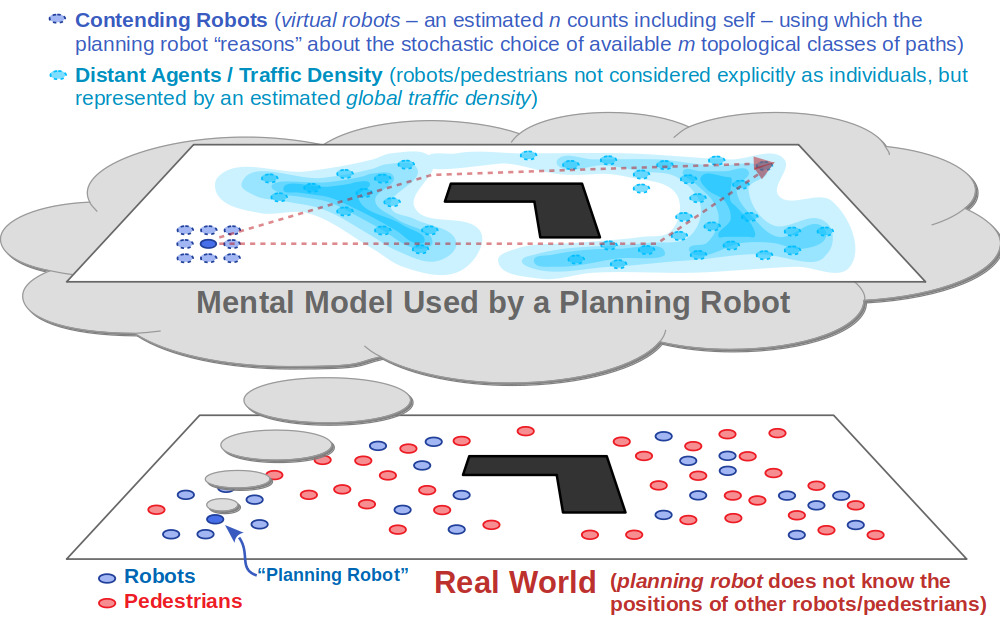}
\caption{\small\changedF{The mental model of a robot computing its own path (a \emph{planning robot}): In absence of the knowledge of the positions of other robots or pedestrians, a robot reasons about its stochastic choice from the available topological classes of paths using a team of virtual robots with similar start/goal (the ``\emph{contending robots''}). All other agents (robots or pedestrians -- referred to as ``\emph{distant agents}'') are accounted by an estimated traffic density map, $\rho$.}}
\label{mental_model}
\end{SCfigure}

\changedB{We consider a planar \changedSB{indoor or urban} environment with multiple robots \changedSB{navigating} from one location to another \changedSB{within} the environment while trying to avoid global traffic congestion.
\changedSB{While the robots are \emph{rational} agents (their actions are determined by an algorithm that altruistically tries to reduce global traffic congestion), there also exist \emph{non-rational agents} (pedestrians) that do not attempt to reduce congestion in their path planning.}
%
Robots need to maintain a minimum safe distance from other robots as well as pedestrians. Because of that,}
if a \changedSB{passage or route in the environment} becomes too crowded, some \changedB{may} have to slow down or wait 
for the congestion to reduce.
%
\changedB{Due \changedSB{to lack of inter-robot coordination, \changedF{a lack of knowledge of the current global traffic state/distribution in the environment,} and} the unpredictable nature of the pedestrians, it is virtually impossible to predict such congestion ahead of time. As a consequence,}
the overall travel time of all robots could increase \changedB{significantly}. 
\changedB{The key insight in addressing this problem is to distribute the robots across} different \changedB{routes in the environment} \changedSB{so as to} lower the \changedB{probability of congestion}. \changedB{Assigning the robots paths in different} topological ($\mathbb{Z}_2$ homology) \changedB{classes in the environment} can help achieving \changedB{that}. 
%

\subsubsection{Types of Agents and the ``Planning Robot''}

\changedSB{In the environment we assume that there are two types of agents -- \textbf{i.} \emph{non-rational agents}, also referred to as \emph{pedestrians}, that always choose the shortest path without consideration for global congestion reduction, and \textbf{ii.} \emph{rational agents}, also referred to as \emph{robots}, that stochastically chooses one of the multiple topologically distinct paths available to it with an aim of reducing global congestion.}
\changedSB{
Without coordination between the robots, all computation of the paths and stochastic path selection for a particular robot happen onboard the robot itself in a decentralized manner, without communication with other robots. In the following sections, we describe the computations made by a particular robot (\changedF{\emph{i.e.},} from the perspective of that individual robot), referred to as ``\emph{the planning robot}''. It is to be noted that all robots are \emph{planning robots} in their own rights, and the same algorithm is used by each robot for its individual computation.}

\subsubsection{\changedSB{Decoupling the Problem to Reason About Global Agent Distribution and Local Path Selection}}

\changedSB{
A \emph{planning robot}
needs to not only reason about other robots that may have similar start and goal locations as itself, but also the robots that may have different start/goal locations as well as the pedestrians. Without knowing the location, intent, or choices of other agents in the environment, a planning robot
decouples this complex problem into two parts:
\textbf{i.} \emph{Estimation of global traffic density, $\rho$, in the environment based on a prior belief of agent trajectories} (Section~\ref{sec:traffic-density}), and,
\textbf{i.} \emph{Probabilistic choice of topologically distinct paths for avoiding congestion} (Sections~\ref{sec:probability-computation}--\ref{sec:simplified-model}).


\changedF{A planning robot does not know the location of other agents (pedestrians or other robots) in the environment.
In order to reason about the available topological classes connecting its own start and goal locations, it considers a group of virtual robots with similar start \& goal locations. These robots are referred to as \emph{contending robots}, and having similar start/goal locations have the same topological classes of paths as the planning robot and thus enables reasoning about the stochastic choice of a class (Figure~\ref{mental_model}).
Other agents (referred to as \emph{distant agents}) in the environment are represented by an estimated traffic density map, which is used in evaluating the different topological classes of paths when making the stochastic choice.}


\subsection{Estimation of Traffic Density in an Environment} \label{sec:traffic-density}

\begin{figure}
	\begin{minipage}[c]{0.6\textwidth}
		\includegraphics[width=\columnwidth, trim=0 0 0 0, clip=true]{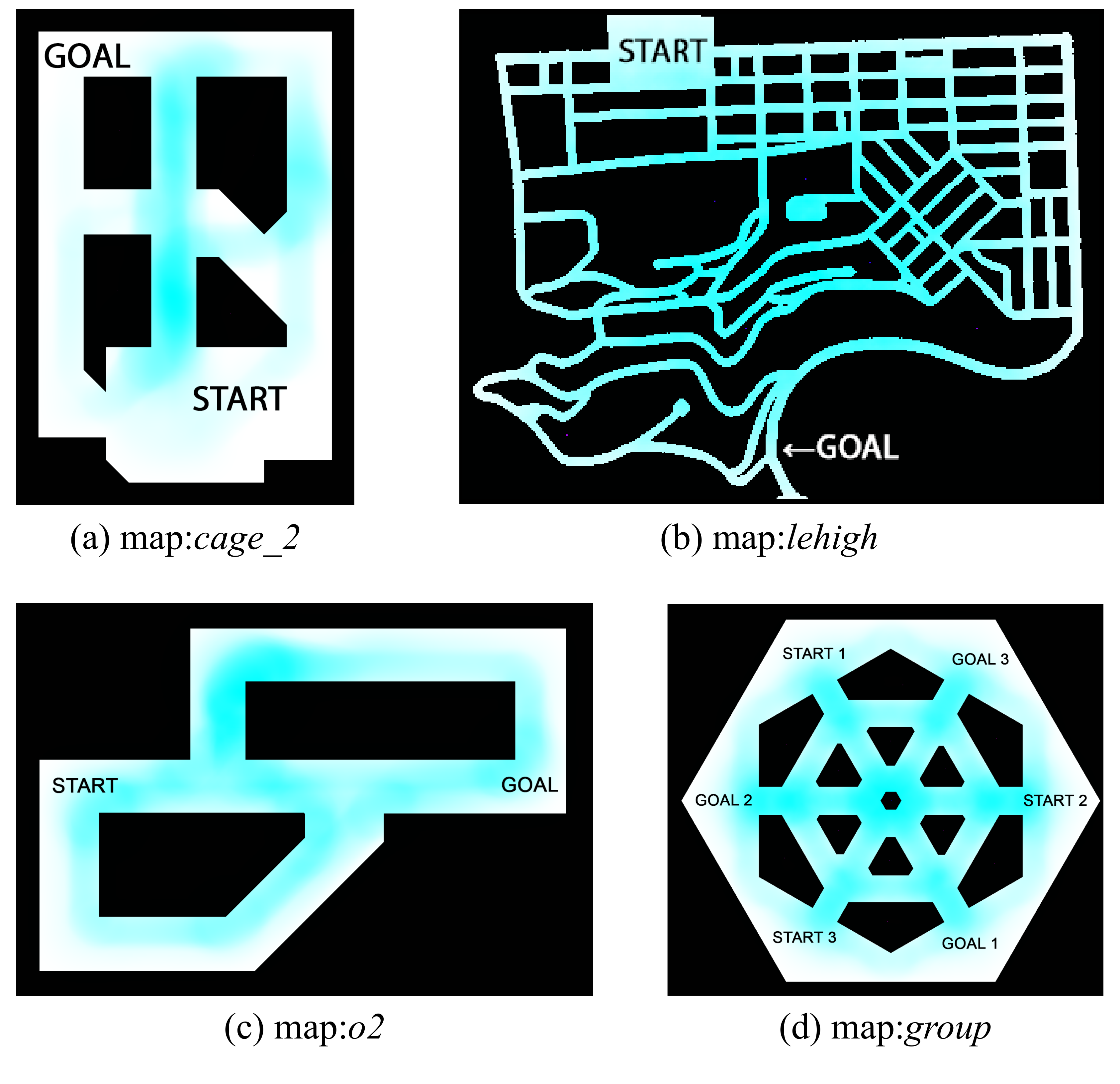}
	\end{minipage}\hfill
	\begin{minipage}[c]{0.37\textwidth}
		\caption{Maps used for simulations and experiments. Traffic density maps \changedF{estimated without a priori knowledge of any historic traffic data} are shown using shades of cyan -- darker cyan indicates potentially busier traffic regions.}
		\label{pic_density_map}
		\subfiglabel{pic_density_map_1}{(a)}
		\subfiglabel{pic_density_map_2}{(b)}
		\subfiglabel{pic_o2}{(c)}
		\subfiglabel{pic_group}{(d)}
	\end{minipage}
\end{figure}

This density estimation not only accounts for 
pedestrians, but also potentially accounts for robots
that have start \& goal locations that are widely different from the planning robot.
We refer to such agents as \emph{distant agents}.
This density is used in formulating
the optimization problem for computing the probability values associated with the different topologically distinct path options available to the planning robot.

The distant agents' traffic density, $\rho: V \rightarrow \mathbb{R}_{+}$, is described by a real number associated with each cell (pixel) in the discrete representation of the environment.
While $\rho$ can be computed from historic traffic data, in absence of such data it can also be estimated from the structure of the environment. In order to do that, traffic is randomly generated with thousands of shortest paths connecting random starts to goals. Then each path is \emph{thickened} by Minkowski summing it with the pedestrians' disk-shaped footprint, and for each discrete cell in the map the number of such thickened paths that pass through it is counted. This distribution is then normalized to obtain the density function, $\rho$. Some examples of traffic density maps computed this way are shown in Figure \ref{pic_density_map}. This traffic density is an estimation only, related to the map's topology and geometry, and needs to be computed only once for a given environment.
}

\subsection{Probabilistic Choice of a Topological Class} \label{sec:probability-computation}

\changedSB{
In order to avoid congestion along the possible routes that the planning robot can choose to reach its goal, the robot \changedF{needs to} reason about the choices made by other robots that have similar start and goal locations as itself. To that end the robot can choose from multiple topological classes of paths representing distinct routes leading to its goal. \changedF{However,} without coordination between the robots, a probabilistic approach is taken in which the planning robot reasons about an \changedF{estimated} $n$ robots, including itself, with similar start and goal (refereed to as \emph{contending robots}) and chooses a topological class stochastically based on probability values computed to minimize the travel time for all the contending robots.

The planning robot starts by computing $m$ topologically distinct paths connecting its start to its goal location (Section~\ref{sec:top-planning}). The paths are 
referred by by their number/index from the set $S=\{1,2,\dots,m\}$.
The planning robot needs to choose one out of these $m$ classes stochastically. Suppose $P_j$ is the probability with which the robot chooses the $j$-th path.
In order to compute the path choice probabilities, $\{P_j\}_{j\in S}$, the robot reasons about the time of travel if all the contending robots choose the paths from $S$ according to the probabilities $\{P_j\}_{j\in S}$ (the contending robots being rational agents will use the same probability computation method themselves, thus arriving at the same probability values).

\subsubsectionB{Path Choice Probability Computation -- The Complete Formulation}

The value of $n$, while unknown, can be estimated based on the a priori knowledge of agent density in the environment or through local sensing.
However, the planning robot uses the value $n$ only for the computation of its own path choice probability values $\{P_j\}_{j\in S}$, while in reality there is no real coordination between the planing robot and the other robots.}
%

\changedSB{We denote the set of contending robots as $R = \{1,2,\cdots,n\}$. Suppose the} $i$-th contending robot's choice of path is $\sigma_i\in S$, \changedSB{for} $i\in R$.
\changedSB{We define a \emph{joint path choice} made by the $n$ contending robots to be}
$\sigma=(\sigma_1,\sigma_2,\dots,\sigma_n)\in S^n$.
\changedSB{Given a joint path choice $\sigma\in S^n$, suppose $C(\sigma)$ is the estimated travel time cost of the entire group of contending robots (which is determined by the geometry of the $m$ paths, the prior estimated traffic density, $\rho$, along those paths, and the number of contending robots in each of those paths due to the joint path choice, $\sigma$ -- the computation of the cost $C(\sigma)$ is described in details in Section~\ref{travel_cost_estimation}).
Since the robots make their individual choices independently, the probability of making the joint path choice $\sigma$ is $\prod_{i=1}^n P_{\sigma_i}$.
We thus formulate the following optimization problem for minimizing the expected cost:}

\vspace{-1.2em}
{\small \begin{eqnarray}
& & \quad\min_{P_1,P_2,\cdots,P_m} \sum_{\sigma\in S^n} C(\sigma) \prod_{i=1}^n P_{\sigma_i} 
\label{eq:optimization} \\[-0.5em]
\text{s.t.~~} & & \!\!\!\!\qquad \sum_{j=1}^m P_{j} = 1~, 
\qquad 0 \leq P_j \leq 1, ~\forall j\in S \nonumber
\end{eqnarray}}

\vspace{-1em}\noindent
where the summation in the objective function is over all possible joint path choices, and hence involves $m^n$ terms.

 \begin{lemma}
 The optimization problem \eqref{eq:optimization} is convex.
 \end{lemma}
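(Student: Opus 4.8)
The problem is a convex optimization problem if its objective is a convex function and its feasible set is convex. The feasible set, cut out by the linear equality $\sum_{j=1}^m P_j = 1$ together with the linear inequalities $0 \le P_j \le 1$, is the probability simplex, which is convex as an intersection of a hyperplane with half-spaces. The entire burden therefore falls on establishing convexity of the objective $f(P_1,\dots,P_m) = \sum_{\sigma\in S^n} C(\sigma) \prod_{i=1}^n P_{\sigma_i}$, and I would concentrate there.

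The plan is to avoid the raw degree-$n$ multilinear expansion, which is not convex term-by-term (a bare monomial such as $P_1 P_2$ already has an indefinite Hessian), and instead exploit the structure of $C(\sigma)$ furnished by Section~\ref{travel_cost_estimation}: the group travel cost depends on the joint choice $\sigma$ only through the occupancy counts $k_j(\sigma) = |\{i\in R : \sigma_i = j\}|$, and decomposes additively over paths as $C(\sigma) = \sum_{j=1}^m g_j(k_j(\sigma))$, where $g_j(k)$ is the total cost incurred on path $j$ when $k$ contending robots select it. Recognizing the objective as the expectation $\mathbb{E}_{\sigma}[C(\sigma)]$ over $n$ independent draws from $(P_1,\dots,P_m)$, linearity of expectation gives $f = \sum_{j=1}^m \mathbb{E}[g_j(k_j)]$. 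Since the marginal law of $k_j$ is $\mathrm{Binomial}(n,P_j)$, each summand depends on the single variable $P_j$ alone:
\[
f(P_1,\dots,P_m) = \sum_{j=1}^m \phi_j(P_j), \qquad \phi_j(P) = \sum_{k=0}^n g_j(k)\binom{n}{k} P^k (1-P)^{n-k}.
\]

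A separable sum is convex on $[0,1]^m$ exactly when each univariate summand is convex (its Hessian is diagonal with entries $\phi_j''$), so it remains to show each $\phi_j$ is convex on $[0,1]$. Here I would invoke the classical second-difference identity for Bernstein polynomials: differentiating $\phi_j$ twice and re-indexing the Bernstein basis yields
\[
\phi_j''(P) = n(n-1) \sum_{k=0}^{n-2} \big( g_j(k+2) - 2 g_j(k+1) + g_j(k) \big) \binom{n-2}{k} P^k (1-P)^{n-2-k}.
\]
Each basis factor $\binom{n-2}{k} P^k (1-P)^{n-2-k}$ is nonnegative on $[0,1]$, so $\phi_j'' \ge 0$ follows as soon as the discrete second differences of the cost sequence satisfy $g_j(k+2) - 2g_j(k+1) + g_j(k) \ge 0$ for all $k$. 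This is precisely the statement that each per-path cost $g_j$ is a convex sequence in the number of robots on the path, which holds because every additional robot on an already-congested route slows every robot on it, making the marginal cost of crowding nondecreasing.

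The main obstacle is not the convexity computation, which is routine once the reduction is in place, but justifying the two structural facts imported from Section~\ref{travel_cost_estimation}. First, the additive decomposition $C(\sigma)=\sum_j g_j(k_j(\sigma))$ is what collapses the $m^n$-term multilinear objective into a separable sum and must be read off from the definition of the cost; without it the expectation no longer splits into single-variable terms and one would face joint convexity of a multivariate (multinomial) Bernstein operator, where convexity preservation is far more delicate. Second, the discrete convexity of each $g_j$ encodes the physical congestion model and is the true source of convexity. Verifying separability is therefore the crux of the argument, and I would devote the bulk of the proof to pinning down these two properties before the Bernstein estimate finishes the job.
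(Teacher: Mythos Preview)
Your Bernstein-polynomial reduction is elegant, but it rests on a separability hypothesis that the paper's cost model does not satisfy. From equation~\eqref{eq:Dj},
\[
D_j(\sigma)=\mathscr{C}_B(\pi_j)+aQ\,\mathscr{C}_T(\pi_j)+b\sum_{j'\in S}N_{j'}(\sigma)\,\mathscr{C}_P(\pi_j,\pi_{j'}),
\]
so the travel time on path $j$ depends on the occupancy of \emph{every} path through the cross-overlap costs $\mathscr{C}_P(\pi_j,\pi_{j'})$ with $j'\neq j$. Hence $C_{\text{avg}}(\sigma)=\tfrac{1}{n}\sum_j N_j(\sigma)D_j(\sigma)$ contains bilinear terms $N_j(\sigma)N_{j'}(\sigma)$, and $C_{\max}(\sigma)=\max_j D_j(\sigma)$ is a maximum over affine functions of all the counts; neither admits the additive decomposition $C(\sigma)=\sum_j g_j(k_j(\sigma))$ that your argument requires. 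Once separability fails, the objective no longer splits as $\sum_j\phi_j(P_j)$: for $j\neq j'$ one has $\mathbb{E}[N_jN_{j'}]=n(n-1)P_jP_{j'}$, which genuinely couples the variables, and the univariate Bernstein second-difference identity no longer applies. You correctly identified separability as ``the crux of the argument''; unfortunately that is exactly where the proposal breaks.

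For comparison, the paper's proof takes a completely different route: it notes that $f$ is positively homogeneous of degree $n$ with nonnegative coefficients and writes
\[
f(\lambda P^{(1)})+f((1-\lambda)P^{(2)})=\lambda^n f(P^{(1)})+(1-\lambda)^n f(P^{(2)})\le\lambda f(P^{(1)})+(1-\lambda)f(P^{(2)}).
\]
This is short, but observe that the left-hand side is $f(\lambda P^{(1)})+f((1-\lambda)P^{(2)})$, not the $f(\lambda P^{(1)}+(1-\lambda)P^{(2)})$ that Jensen's inequality demands, and homogeneity alone does not bridge that gap. So your approach and the paper's are genuinely different, yet both leave the convexity claim unsettled; a complete argument for the $C_{\text{avg}}$ case would seem to require, for instance, positive semidefiniteness of the symmetrized overlap matrix $\big[\mathscr{C}_P(\pi_j,\pi_{j'})\big]_{j,j'}$, which neither proof addresses.
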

 \begin{proof}
 The equality constraint is clearly affine and the inequality constraints are linear.
 Define the objective function $f(\{P_j\}_{j\in S}) = \sum_{\sigma\in{S}^n} C(\sigma) \prod_{i=1}^n P_{\sigma_i}$, which is homogeneous of degree $n$ in the probabilities.
 Thus, for two sets of probability values, $\{P^{(1)}_j\}_{j\in S}$ and $\{P^{(2)}_j\}_{j\in S}$, and with $0\leq \lambda\leq 1$,
 \begin{eqnarray}
 & & f(\{\lambda P^{(1)}_j\}_{j\in S}) + f(\{(1-\lambda) P^{(2)}_j\}_{j\in S}) \nonumber \\
 & = & 
 \lambda^n f(\{P^{(1)}_j\}_{j\in S}) + (1-\lambda)^n f(\{P^{(2)}_j\}_{j\in S}) \nonumber \\
 & \leq & \lambda f(\{P^{(1)}_j\}_{j\in S}) + (1-\lambda) f(\{P^{(2)}_j\}_{j\in S}) \nonumber 
 \end{eqnarray}
 Hence the objective function is convex.
 \end{proof}

Even though this optimization problem is convex,
the number of terms in the objective function grows exponentially (or factorially, upon some simplification) with $n$. 
Hence, for all practical purposes, a direct solution to this optimization problem is not feasible since the evaluation of the objective function takes a lot of time.
Hence in Section~\ref{sec:simplified-model} we will formulate two simplified optimization problems that are computationally more amenable.

\subsection{Travel Time Cost Computation} \label{travel_cost_estimation}

In this section, we describe the computation of the travel time cost function $C:S^n \rightarrow \mathbb{R}_{+}$ that estimates the travel time of the team of contending robots for a joint path choice, $\sigma$. 
%
We first observe that the individual travel time of the $i$-th contending robot will not only depend on its own chosen path, $\sigma_i$, but also the choices made by the other contending robots since that will determine the level of congestion along the different parts of the path.
Define the set of contending robots that choose the path $j\in S$ as $R_j = \{i ~|~ \sigma_i = j\} \subseteq R$ and the number of those contending robots 
as $N_j(\sigma) = \left| R_j(\sigma) \right|$.
Due to uniformity between the contending robots, if two contending robots, $i_1$ and $i_2$,
choose the same path (say, $j = \sigma_{i_1} = \sigma_{i_2}$),
then their estimated individual travel times will be the same.
Thus, for a given joint path choice $\sigma$, we define 
the \emph{estimated travel time cost for the $j$-th path}
as $D_j(\sigma)$.
%
We define two possible types of travel time cost functions for use in the objective function of \eqref{eq:optimization}:

\vspace{-0.3em}
\begin{itemize}
    \item[1.] \emph{Average Travel Time Cost:}
    {\small $C_\text{avg}(\sigma) \!=\!\displaystyle \frac{1}{n} \sum_{j=1}^m N_j(\sigma) D_j(\sigma)$},
    
    
    using which would try to minimize the average of the travel times of all the contending robots.
    \item[2.] \emph{Maximum Travel Time Cost:}
    {$C_\text{max}(\sigma) \!=\! \displaystyle \max_{j\in S} D_j(\sigma)$},
    
    
    using would try to minimize the maximum out of the travel times of all the contending robots.
\end{itemize}

\noindent
The estimated travel time cost for the $j$-th path, $D_j(\sigma)$, for a given joint path choice $\sigma$,
not only depends on the number of contending robots assigned to the path in the $j$-th class, but also the number of contending robots assigned to the other paths in $S$, since those paths can potentially have geometric overlaps with the $j$-th path (Figure~\ref{pic_penalty}).
As described earlier, we use A* search in the $H_2$-augmented graph (Section~\ref{h2-aug-graph}) to compute distinct paths in the $m$ different topological classes connecting the start and the goal location of the planning robot. Let's refer to these paths as $\{\pi_j\}_{j\in S}$.
The travel cost for the $j$-th path is then computed as

\vspace{-0.4em}
{\small \begin{equation} \label{eq:Dj}
    D_j(\sigma) = \mathscr{C}_B(\pi_j) + a\, Q\, \mathscr{C}_T (\pi_j) + b\, \sum_{j'\in S}\! N_{j'}(\sigma)~ \mathscr{C}_P(\pi_j,\pi_{j'})~~
\end{equation}}

\vspace{-1.8em}\noindent
where,

\begin{figure}
	\centering
	\includegraphics[width=0.75\columnwidth]{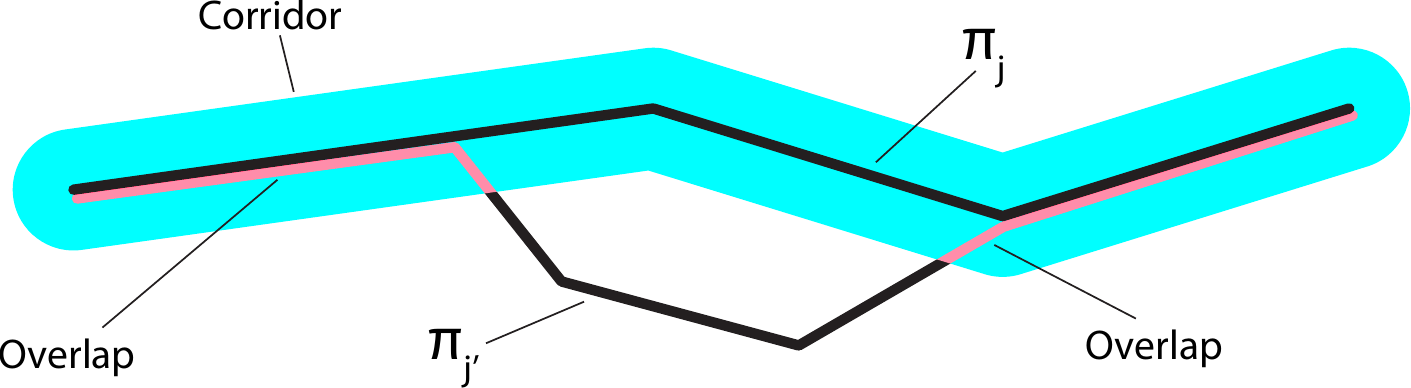}
\caption{In the proximity penalty computation, the overlap of two paths is determined by drawing a corridor around the path, $\pi_j$, then measuring the part of the path $\pi_{j'}$ that lies within that corridor.}
	\label{pic_penalty}
\end{figure}

%
%
\begin{itemize} 
    \item[i.] $\mathscr{C}_B(\pi_j)$ 
    is the \emph{base travel cost}
    as computed by the search in the $H_2$-augmented graph (\emph{i.e.} estimated time taken to follow the path without consideration for any other agent in the environment). 
    \item[ii.] The second term computes the additional cost due to the a priori belief of traffic density, $\rho$, modeled to be  proportional\footnote{\label{foot:lin}Assuming a linearized model} to the net estimated traffic density along the path, $\mathscr{C}_T (\pi_j) = \sum_{s\in \pi_j} \rho(s)$\footnote{Here the summation over the path refers to the summation over the discrete cells 
    that constitute the path, with $\rho(s)$ being the density in cell $s$.} -- referred to as the \emph{traffic-weighted travel cost}, and scaled by the estimated number of distant agents, $Q$, in the environment. The proportionality constant, $a$, is determined experimentally (described in the next paragraph).
    \item[iii.] 
    The last term computes the \emph{proximity penalty} or \emph{overlap cost} between pairs of paths due to multiple contending robots from different (or the same) paths 
    creating congestion along the regions of $\pi_j$ where there is a geometric overlap with $\pi_{j'}$ (Figure~\ref{pic_penalty}). This includes \emph{self-overlap cost} (when $j'=j$) due to the multiplicity of contending robots following the $j$-th path. The cost is proportional${}^{\ref{foot:lin}}$ to the number of additional contending robots, $N_{j'}(\sigma)$, in the overlapping path and the \emph{amount of overlap}, $\mathscr{C}_P(\pi_j,\pi_{j'})$.
    The amount of overlap itself consists of two parts: $\mathscr{C}_P(\pi_j,\pi_{j'}) = \mathscr{C}_{P,B}(\pi_j,\pi_{j'}) + a\, Q\, \mathscr{C}_{P,T} (\pi_j,\pi_{j'})$, where the first part is the time cost of the part of the path $\pi_{j'}$ that overlaps with $\pi_j$ as is computed by the search in the $H_2$-augmnted graph (with the overlap being determined by the proximity between the points on the two paths -- Figure~\ref{pic_penalty}),
    and $\mathscr{C}_{O,T} (\pi_j,\pi_{j'})$ is the net estimated traffic density on the overlapping parts of $\pi_{j'}$.
\end{itemize}

\changedC{The values of $a$ and $b$ \changedSB{are determined experimentally by running multiple simulations} in a simple single-passage map (Figure \ref{pic_max_travel_time_relation}) with a varying number of distant agents (we choose pedestrians only) and a varying number of contending robots.
The width and the length of the passage in this map \changedSB{are chosen to be} similar to those in the \changedSB{maps used in experiments and simulations}. For more details on the implementation of the simulations, refer to Section~\ref{sec:simulation-experiment}.} 

\begin{figure}
	\centering
	\includegraphics[width=0.99\columnwidth, trim=0 0 0 0, clip=true]{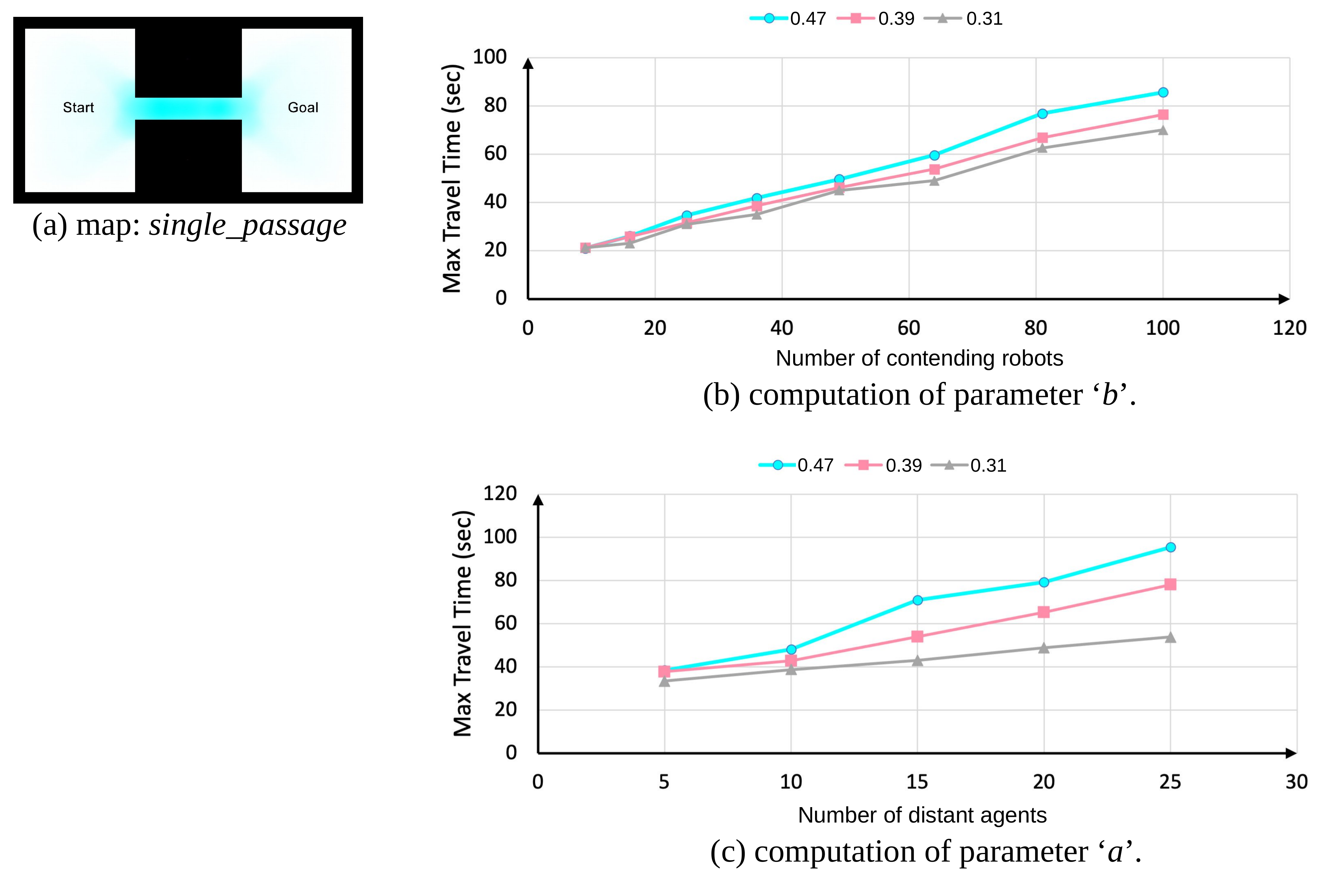}
	\caption{Experimental estimation of the proportionality constants $a$ and $b$ in \eqref{eq:Dj}.
		The curves in different colors in (b) and (c) are with different ratios of the robot's safety distance to the passage's width as that is different for different environments. 
		\textbf{(a)} \changedSB{The} single-passage map \changedSB{used to experimentally compute the proportionality constants $a$ and $b$}, with \changedSB{the} estimated traffic density, \changedSB{$\rho$} colored in cyan.
		\textbf{(b)} Max travel time increases with the number of contending robots, with the fixed number of pedestrians. The slope of this curve computes the constant $b$.
		\textbf{(c)} Max travel time increases with the number of pedestrians, with the number of robots fixed at 10. The slope of this curve computes the constant $a$.
		}
\label{pic_max_travel_time_relation}
\subfiglabel{fig:curve_map}{(a)}
\subfiglabel{fig:para}{(b)}
\subfiglabel{fig:para_2}{(c)} \vspace{-1em}
\end{figure}


\subsection{\changedSB{Simplified Formulations}} \label{sec:simplified-model}

\changedSB{The optimization problem in \eqref{eq:optimization} is referred to as the \emph{complete model} 
and has $O(m^n)$ terms in the objective function, making it extremely computationally expensive to solve with a large number of robots and available topological classes. We thus propose couple of approximations to simplify the optimization problem. These approximations rely on the fact that the number of \emph{contending robots}, $n$, that the planning robot assumes is an estimate and is purely for the purpose of computing its own path choice probabilities. In reality, there is no coordination or communication between nearby robots. We thus use extreme values of $n$ to simplify the models.}


\subsubsection{Two-robot Model}
In this model we assume that the number of contending robots the planning robot considers is $n=2$, so that the objective function in \eqref{eq:optimization} becomes quadratic, which can be solved efficiently using a quadratic program.
However, in computing 
the travel time cost for the $j$-th path, $D_j(\sigma)$, with $\sigma\in S^2$, we can still account for a number $n$ other than $2$ by simply replacing each of the $2$ robots with $\frac{n}{2}$ robots when computing the overlap costs. This is effectively done in \eqref{eq:Dj} by scaling \& redefining the robot counting function as $N_{j'}(\sigma) = \frac{n}{2} |R_{j'}(\sigma)|, ~\sigma\in S^2$.


\subsubsection{Ensemble Model}

\changedSB{In this model the planning robot assumes a large number of contending robots so that the number of robots in the $j$-th class is approximately $n P_j$.
Considering the problem of minimizing the \emph{maximum travel time cost} (\emph{i.e.}, the maximum out of the travel times of all the contending robots) This allows us to reformulate the optimization problem as}

\vspace{-0.8em}
{\small \begin{eqnarray}
& & \min_{P_1,P_2,\cdots,P_m} \Big( \max_{j\in S} ~K_{j} (P_1,P_2,\dots,P_m)\Big)\label{eq:ensemble-optimization} \\[-0.4em]
\text{s.t.~~} & & 
\sum_{j=1}^m P_{j} = 1~, 
\quad 0 \leq P_j \leq 1, ~\forall j\in S \nonumber
\end{eqnarray}}

\vspace{-1.3em}\noindent
\changedSB{where $K_j(P_1,P_2,\dots,P_m)$ is the estimated travel time cost for a robot assigned to the $j$-th path if the number of robots following the $l$-th path is $n P_l$ for all $l\in S$. The expression of $K_j$ is derived naturally from the definition of $D_j$ in \eqref{eq:Dj}:

\vspace{-0.7em}
{\small \begin{eqnarray}
& & \!\!\!\!\!\!\!\!\!\!\!\! K_{j} (P_1,P_2,\dots,P_m) \nonumber \\[-0.5em]
& & = \mathscr{C}_B(\pi_j) + a\, Q\, \mathscr{C}_T (\pi_j) + b\, \sum_{j'\in S}\! \mathscr{C}_P(\pi_j,\pi_{j'}) ~n P_{j'} \nonumber \\[-0.5em]
& & \approx n b\, \sum_{j'\in S}\! \mathscr{C}_P(\pi_j,\pi_{j'}) P_{j'} \quad\text{(since $n$ is large)} \label{eq:Kj}
\end{eqnarray}}

\vspace{-0.2em}
\noindent
Note that $K_{j} (P_1,P_2,\dots,P_m)$ is affine in $\{P_j\}_{l\in S}$, and hence the objective function in \eqref{eq:ensemble-optimization} being $\max$ of affine functions, is convex \citep{boyd2004convex}. As a consequence, the optimization problem in \eqref{eq:ensemble-optimization} can be solved using efficient numerical methods and does not have an exponentially large number of terms as was the case in \eqref{eq:optimization}.
It is worth noting that there is no meaningful analogous ensemble model for minimization of the \emph{average travel time cost}
since the average of the affine functions, $\{K_{j}\}_{j\in S}$, would result in an affine objective function in \eqref{eq:ensemble-optimization}, which would result in a linear program, the solutions to which is always trivial with one of the probabilities in $\{P_j\}_{j\in S}$ being equal to $1$ and rest $0$.

}

\subsubsection{Path Choice Probability Values}

Given a planning robot's start and goal location in an environment, the path choice probabilities, \changedSB{$\{P_j\}_{j\in S}$,
depend on the choice of the model (\emph{complete model}, \emph{2-robot model} or \emph{ensemble model}) as well as the number of contending agents, $n$.
For the ensemble model, with the large $n$ assumption, it is clear from \eqref{eq:ensemble-optimization} and \eqref{eq:Kj} that the probability values are independent of the choice of $n$.
A comparison of the probability values computed using the different models and different $n$ is shown in Table \ref{tab_model_statistic}. The similarity among the values computed using the complete model and the $2$-robot model is apparent,
while the values from the ensemble model
get closer to the $2$-robot model as the value of $n$ increases.
This allows us to choose a simplified model for fast computation of the probabilities in experiments and simulations with a large number of contending robots.}

\begin{table}
\begin{threeparttable} 
\caption{A comparison of the probability distribution over classes by different models on Map ``\emph{cage\_1}'' without non-rational agents. The paths of classes are shown in Figure \ref{pic_classes}.}
\label{tab_model_statistic}
\centering
\begin{tabular}{m{0.2\columnwidth}m{0.2\columnwidth}m{0.2\columnwidth}m{0.2\columnwidth}}
\toprule
\multirow{2}{*}{\begin{tabular}{@{}c@{}}Contd. Rbt. \#\\ $n$\end{tabular}}& \multicolumn{3}{c}{Probability Distribution ($m=3$)\tnote{*}}\\
\cmidrule{2-4}
&Class 1 &Class 2 &Class 3\\
\hline 
\multicolumn{4}{c}{\textbf{Complete Model:} Average Travel Time Cost} \\
\hline 
5  & \cellcolor[HTML]{5CDFEA}0.64071 & \cellcolor[HTML]{A4EDF3}0.35917 & \cellcolor[HTML]{FFFFFF}0.00012 \\
10 & \cellcolor[HTML]{75E4ED}0.54207 & \cellcolor[HTML]{A6EEF4}0.35219 & \cellcolor[HTML]{E5FAFC}0.10574 \\
15 & \cellcolor[HTML]{82E6EF}0.49175 & \cellcolor[HTML]{A6EEF4}0.35085 & \cellcolor[HTML]{D7F7FA}0.15740 \\
20 & \textemdash\textsuperscript{**} & \textemdash\textsuperscript{**} & \textemdash\textsuperscript{**} \\
\hline 
\multicolumn{4}{c}{\textbf{Complete Model:} Maximum Travel Time Cost}\\
\hline 
5  & \cellcolor[HTML]{3ED9E6}0.76014 & \cellcolor[HTML]{C2F3F7}0.23985 & \cellcolor[HTML]{FFFFFF}0.00000 \\
10 & \cellcolor[HTML]{53DDE8}0.67781 & \cellcolor[HTML]{ADEFF5}0.32215 & \cellcolor[HTML]{FFFFFF}0.00004 \\
15 & \cellcolor[HTML]{74E4ED}0.54728 & \cellcolor[HTML]{9EECF3}0.38163 & \cellcolor[HTML]{EDFCFD}0.07109 \\
20 & \textemdash\tnote{**}& \textemdash\textsuperscript{**} & \textemdash\textsuperscript{**} \\
\hline 
\multicolumn{4}{c}{\textbf{$2$-robot Model:} Average Travel Time Cost}\\
\hline 
5  & \cellcolor[HTML]{4BDBE7}0.70690 & \cellcolor[HTML]{B5F1F6}0.29310 & \cellcolor[HTML]{FFFFFF}0.00000 \\
10 & \cellcolor[HTML]{73E3ED}0.55048 & \cellcolor[HTML]{A8EEF4}0.34206 & \cellcolor[HTML]{E4FAFC}0.10746 \\
15 & \cellcolor[HTML]{84E7EF}0.48449 & \cellcolor[HTML]{A9EEF4}0.33841 & \cellcolor[HTML]{D2F6F9}0.17710 \\
20 & \cellcolor[HTML]{8CE8F0}0.45323 & \cellcolor[HTML]{AAEEF4}0.33669 & \cellcolor[HTML]{CAF5F8}0.21009 \\
25 & \cellcolor[HTML]{91E9F1}0.43499 & \cellcolor[HTML]{AAEEF4}0.33568 & \cellcolor[HTML]{C5F4F8}0.22933 \\
30 & \cellcolor[HTML]{94EAF1}0.42304 & \cellcolor[HTML]{AAEEF4}0.33502 & \cellcolor[HTML]{C2F3F7}0.24194 \\
\hline 
\multicolumn{4}{c}{\textbf{$2$-robot Model:} Maximum Travel Time Cost}\\
\hline 
5  & \cellcolor[HTML]{5EDFEA}0.63448 & \cellcolor[HTML]{A2EDF3}0.36552 & \cellcolor[HTML]{FFFFFF}0.00000 \\
10 & \cellcolor[HTML]{88E8F0}0.46863 & \cellcolor[HTML]{A2EDF3}0.36812 & \cellcolor[HTML]{D6F7FA}0.16325 \\
15 & \cellcolor[HTML]{94EAF1}0.42046 & \cellcolor[HTML]{A4EDF3}0.35715 & \cellcolor[HTML]{C7F4F8}0.22239 \\
20 & \cellcolor[HTML]{99EBF2}0.40036 & \cellcolor[HTML]{A6EEF4}0.35227 & \cellcolor[HTML]{C0F3F7}0.24737 \\
25 & \cellcolor[HTML]{9CECF2}0.38933 & \cellcolor[HTML]{A6EEF4}0.34950 & \cellcolor[HTML]{BDF2F7}0.26117 \\
30 & \cellcolor[HTML]{9EECF2}0.38237 & \cellcolor[HTML]{A7EEF4}0.34772 & \cellcolor[HTML]{BBF2F6}0.26991 \\
\hline 
\multicolumn{4}{c}{\textbf{Ensemble Model:} Maximum Travel Time Cost}\\
\hline 
Any $n$ & \cellcolor[HTML]{A2EDF3}0.36570 & \cellcolor[HTML]{ABEFF4}0.33185 & \cellcolor[HTML]{B2F0F5}0.30245 \\
\bottomrule
\end{tabular}
\begin{tablenotes}
\item[*] All methods use $a=0.1625$, $b=0.04548$, and $Q=0$.
\item[**] Not computable due to the memory overflow in computation of $3^{20}$ terms in the objective function of the complete model.
\end{tablenotes}
\end{threeparttable}
\end{table}

\section{Execution of Chosen Path with {Local Collision Avoidance}}
\label{sec:implementation}
\label{sec:simulation-experiment}

\changedF{In this section we describe the algorithm and controller used by a robot for following the stochastically chosen path in a topological class while avoiding immediate collisions based on local sensing.}

The overall algorithm for each robot \footnote{\changedF{In this section we referred to a \emph{planning robot} simply as a \emph{robot} without any risk of confusion.}} is as follows: 
A robot stochastically chooses a \emph{reference path} from the paths $\{\pi_j\}_{j\in S}$ that it computed using A* search in the $H_2$-augmented graph according to its own computed probability distribution $\{P_j\}_{j\in S}$.
Once a robot chooses its own \emph{reference path}, it commits to that path, since without inter-robot coordination and without live global traffic updates, there is not new information to warrant a full-blown replanning of reference path.
Each robot then starts executing its reference path while performing a \emph{fast replanning} (described in Section~\ref{sec:cost-heuristic}) at regular intervals of time with an appropriately chosen heuristic function 
in order
to follow the reference path while avoiding collision with pedestrians and other robots.
The fast replanning is performed using A* search in the $H_2$-augmented graph, $\mathcal{G}_{H_2}$, and does not compute the path choice probabilities, but simply avoids high pedestrian/robot density regions as estimated in the immediate future in the spatio-temporal domain using sensing of the immediate vicinity. Feedback linearization and a potential-based approach allows control of the robot while avoiding collision.
The following subsections give more details \changedF{on prediction of agent density in the immediate spatiotemporal neighborhood (Section~\ref{sec:probability-prediction}), the fast replanning algorithm (Section~\ref{sec:cost-heuristic}), and a potential-based local collision avoidance for the non-holonomic robot model (Section~\ref{sec:collision-avoidance})}.


\subsection{Spatio-temporal Representation of Other Agents' Near-future Occupancy Probability Distribution}
\label{sec:probability-prediction}

\begin{figure}
	\centering
	\includegraphics[width=0.95\columnwidth, trim=0 0 0 0, clip=true]{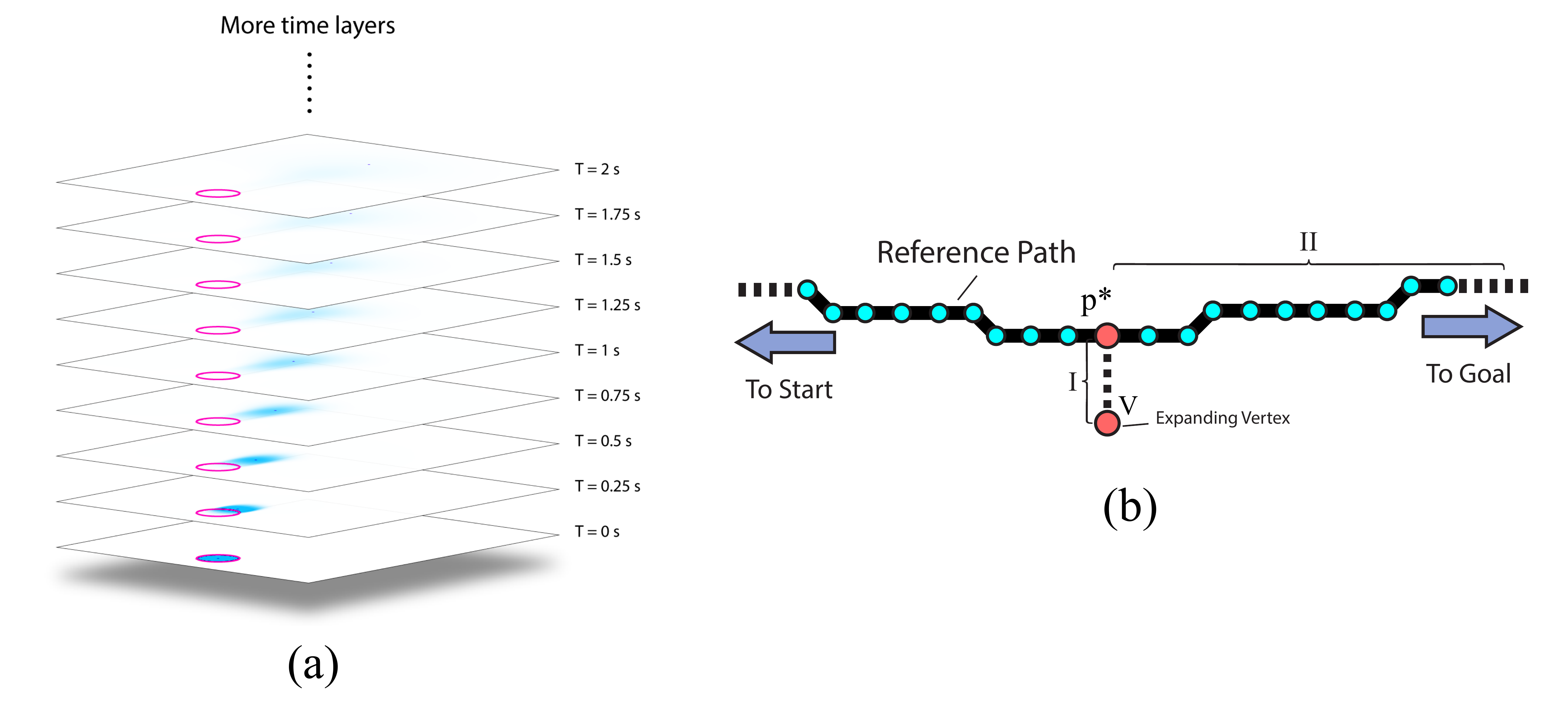}
	\caption{Fast replanning:
		\textbf{(a)} Illustration of \changedF{computation of a nearby agent's} probability in X-Y-T configuration space with $\delta_t=0.25 s$. The lighter cyan, the less likely it is occupied by \changedF{the agent}. The red ring shows the region (including safety radius) occupied by the \changedF{agent} at the time of prediction.
		\textbf{(b)} Heuristic function for fast re-planning returns the estimated time of travel for the parts I and II.}
\subfiglabel{pic_timelayers}{(a)}
\subfiglabel{pic_heuristic_re-planning}{(b)}
\end{figure}

\changedSB{For each agent in its immediate vicinity, a robot performs a short-term prediction of the agent's occupancy probability distribution as a density function in its spatio-temporal configuration space. 
Given the instantaneous position and velocity of a nearby agent (estimated using sensors onboard the robot),
the robot employs a simple prediction-only Markov localization approach~\citep{FOX1998195,zhan2021frontiers} (a discrete analog of Kalman filter) that uses a motion model to predict the probability of occupancy distribution 
(in a uniform discrete representation of the spatio-temporal domain) 
of the agent for the next $m_{\text{max}}$ timesteps
(with each times-step of length $\delta t$, and a spatial discretization of $\delta r'$ -- the same discretization used in construction of $\mathcal{G}$ --
Section~\ref{sec:discretization}).}
%
\changedC{
These occupancy probability values from the different nearby agents are aggregated (point-wise maximum) to construct the probability of occupancy map, $\mathcal{P}(x,y,t)$, that assigns a value to every discrete cell \changedF{(Figure \ref{pic_timelayers})}.
In practice, the probability computations are done on-the-fly during the graph search and only for $t$ between the current time and $m_{\text{max}}$ time-steps into the future.
}

\subsection{Fast Replanning -- Heuristic Function and Cost Function}
\label{sec:cost-heuristic}

\changedC{In order to perform a fast re-planning to avoid collision with other \changedF{nearby} agents, while ensuring that a robot stays committed to its reference path, we design a \emph{reference-path-based heuristic function} for guiding an A* search on the $H_2$-augmented graph $\mathcal{G}_{H_2}$ for quickly computing a path in the same homology class as the reference path.}
The heuristic function \changedB{for} the fast re-planner \changedF{for a vertex $(\mathbf{v},t,h)\in\mathcal{V}_{H_2}$ is computed as follows (Figure~\ref{pic_heuristic_re-planning}):}
We compute the closest vertex $\mathbf{p}^* = \arg\!\min_{\mathbf{p}\in \pi_{\text{ref}}} \|\mathbf{x}-\mathbf{p}\|$ \changedB{on} the reference path, \changedF{$\pi_{\text{ref}}$ (described as a sequence of points on the planar domain)}, and return the \changedB{Euclidean distance between $\mathbf{v}$ and $\mathbf{p}^*$} (\changedB{referred to as \emph{part I} of the heuristic function}), \changedF{and add to it} the cost \changedC{(travel time)} from $\mathbf{p}^*$ to the goal \changedB{on the reference path} (referred to as \emph{part II} of the heuristic function) which was \changedC{pre-computed as part of the reference path search in $H_2$-augmented graph.}
\changedB{More formally, the heuristic function evaluated at $(\mathbf{x},t,h)\in \mathcal{V}_{H_2}$ is described as $h_{\pi_{\text{ref}}}(\mathbf{x},t,h) = \alpha \left(  \frac{ \|\mathbf{x}-\mathbf{p}^*\|}{V_\text{max}} + C_{\pi_{\text{ref}}}(\mathbf{p}^*) \right)$,
where 
$\alpha\leq 1$ is a constant to tune the inadmisibility of the heuristic function, with lower value of $\alpha$ 
allowing greater deviation of the re-planned path from the reference path. 
}


The cost function for fast replanning
\changedSB{not only tries to minimize travel time, but also accounts for the computed nearby agent probability of occupancy, $\mathcal{P}$. In particular, the cost of an edge, $e\in \mathcal{E}$, connecting two points in the spatio-temporal domain is described by $C_{\mathcal{G}}(e) = 
\int_{\overline{e}}  \frac{1+\iota}{1+\iota-\mathcal{P}(x,y,t)} \, dt$, where the integration is a line integration on the segment representing the edge and is performed numerically using linear interpolation of $\mathcal{P}$ along the uniformly discretized segment, and $\iota=0.001$ is a small positive constant used for numerical stability. Note that if the probability of occupancy is close to $1$ at some point, that point will have very high cost and will hence be avoided.}

\subsection{Non-holonomic Robot Control for Trajectory Tracking and Potential-based Collision Avoidance}
\label{sec:collision-avoidance}

\changedF{\emph{Trajectory tracking:} Each robot tracks its computed trajectory (the reference trajectory at the beginning, and the fast-replanned trajectory subsequently).}
In the experiments, each non-holonomic differential drive robot controls a \emph{lookahead point}~\citep{d1992dynamic} by computing the corresponding linear and angular velocities. 
\changedF{At a given time step, a target point $\mathbf{p}_l=(x_l,y_l)$ is extracted from the time parametrized trajectory output by the re-planning algorithm. This set point is tracked using a feedback linearization controller designed as follows: Given the current robot position $\mathbf{p}_c=(x_c,y_c)$, orientation $\theta_c$ and a constant lookahead distance $d_f$, a Cartesian velocity $\mathbf{v}=(v_x,v_y)=c_s (\mathbf{p}_l - \mathbf{p}_d)$ is computed for a point, $\mathbf{p}_d=(x_d,y_d)=(x_c+d_f\cos\theta_c,y_c+d_f\sin\theta_c)$, which is a point at a distance $d_f$ ahead of the robot.
This enables the lookahead point track the target point $\mathbf{p}_l$. $c_s$ is a constant scalar gain. Corresponding linear and angular velocities for the robot is thus computed as $u=v_x\cos\theta_c + v_y\sin\theta_c$ and $\omega=(v_y\cos\theta_c - v_x\sin\theta_c)/d_f$. For the differential drive robot, linear and angular velocities can be mapped to left and right wheel efforts (or velocities) as $v_L=u-\omega l/2$ and $v_R=u+\omega l/2$ where $l$ is the separation between the two wheels of the robot.}

\emph{Local collision avoidance with other agents:}
Depending on the robot heading, a \emph{fan-shaped collision cone} is generated in front of the robot with radius $r_c$ and angle $\alpha_c$. For every other agent, $i$, detected inside the collision cone with position $\mathbf{p}_{\text{agnt},i}=(x_{\text{agnt},i},y_{\text{agnt},i})$, a repulsion velocity
to slow down the robot, that is inversely proportional to the distance between them,
is computed as $\mathbf{v}_{\text{rep},i} = -c_\text{a}\, (\mathbf{p}_{\text{agnt},i} - \mathbf{p}_c)/\|\mathbf{p}_{\text{agnt},i} - \mathbf{p}_c\|^2$, where $c_\text{a}$ is a positive constant.
The resulting Cartesian velocity of the robot is computed as $\mathbf{v}_{\text{avoid}} = \mathbf{v} + \sum_{i} \mathbf{v}_{\text{agnt},i}$.
\changedF{This emulates the behavior of a vehicle that tries to avoid other agents ahead of it, but not behind it.

\emph{Local collision avoidance with obstacles:}
For avoiding robot-environment collision (including environment boundaries and obstacles) we use a velocity cancellation policy as follows: The closest point on an obstacle or environment boundary to the robot is denoted as $\mathbf{p}_\text{env}=(x_\text{env},y_\text{env})$.
Then the vector from the robot to the environment is defined as $\mathbf{v}_\text{env}= \mathbf{p}_\text{env}-\mathbf{p}_c$.
An obstacle repulsion component of the velocity is activated only if the obstacle is sufficiently close, and the robot has a velocity component \emph{towards} the obstacle,
and thus the final velocity of the robot is computed as follows:

\indent $\mathbf{v}_{\text{final}} = \left\{\begin{array}{ll}
\mathbf{v}_{\text{avoid}} - c_\text{e}\, (\mathbf{v}_{\text{avoid}} \cdot \mathbf{v}_\text{env}) \mathbf{v}_\text{env}/ \|\mathbf{v}_\text{env}\|^2,  & ~~\text{if } \|\mathbf{v}_\text{env}\| < d_e \text{ and } \mathbf{v} \cdot \mathbf{v}_\text{env} > 0, \\
\mathbf{v}_{\text{avoid}}, & ~~\text{otherwise.}
\end{array}\right.$

\noindent
where, $c_\text{e}$ and $d_e$ are positive constants.}

\section{Results \& Discussions}
\label{sec:results}
\changed{
\changedC{We run the simulations on several maps: ``\emph{cage\_1}'' (Figure \ref{pic_classes}), ``\emph{cage\_2}'' (Figure \ref{pic_density_map_1}), ``\emph{lehigh}'' (Figure \ref{pic_density_map_2}), \changedXW{``\emph{o2}'' (Figure \ref{pic_o2}), and ``\emph{group}'' (Figure \ref{pic_group})}.
%
\changedXW{Three types of comparisons are made:}
\begin{enumerate}
\item[i.] \changed{We \changedXW{first} compare our proposed \emph{topological planning} algorithm (a robot stochastically choosing paths from available topological classes) with a \emph{shortest-path} algorithm (each robot, without any inter-robot coordination, chooses the shortest path to goal).
In this comparison all distant agents are modeled as pedestrians and their trajectories are randomly generated.
%
}

\item[ii.]
\changedXW{Then, 
\changedF{we demonstrate the effectiveness of \emph{a.} the traffic density map, and, \emph{b.} the proposed computation of path choice probabilities by comparing the performance of our algorithm with versions that either does use an uniform traffic density map or uniformly path choice probabilities over the topological classes.}
\item[iii.]
Finally, we apply our proposed method to a setup with multiple groups of robots that start from different locations and have different goal locations, and compare the performance of our method with the performance of the shortest-path algorithm.}
\end{enumerate}

\noindent
\revisedXW{It is worth noting that the fundamental premise of lack of inter-agent coordination or communication makes our work extremely unique. We assume that there exists no inter-agent communication or coordination, and robots do not share their plan or intent with other agents or with any central server. No other prior work, to our knowledge, assumes complete lack of communication or coordination (for example, in~\cite{street2021congestion} there exists communication and coordination between the robots in construction of a shared PRT). Hence a fair comparison of our method with such alternatives in literature is not possible.}

\vspace{0.5em}\noindent
\textbf{i.} 
\changedXW{In the topological-versus-shortest-path comparison, the robots are allowed to choose one out of up to $m=6$ classes.}
\changedC{
\changedSB{In each environment we vary the number of robots, $n$, and the number of pedestrians, $Q$, and note the average travel time of the robots and the maximum travel time (the time taken by the last robot to reach its goal). We also measure the average time spent on collision avoidance per robot. A simulation with the same initial conditions is performed using each of the proposed topological algorithm and the shortest path algorithm. 
}}
\changedC{Tables~\ref{tab_performance_comparison} and \ref{tab_performance_comparison_2} shows a performance comparison. Each of the percentage numbers is the ratio of travel time (average or maximum) between the simulations using the topological algorithm and that using the shortest path algorithm. For collisions we show the difference between the time spent avoiding collisions using the topological algorithm and that using the shortest path algorithm\footnote{in order to avoid divisions by zero, we choose not to compute percentage values.}
In computing the path assignment probability values for the topological algorithm we can use either the maximum travel time cost, $C_{\text{max}}$, or the average travel time cost, $C_{\text{avg}}$, and choose one out of the two simplified formulations -- \emph{2-robot model} (solved using QP library `qpOASES') or the \emph{ensemble model} (solved using NLP library `NLopt'). This is indicated in the first column of the tables.

\highlight{As evident from the results, as the number of robots and pedestrians increase (\emph{i.e.}, the potential of congestion increases), our proposed topological algorithm significantly outperforms the shortest path algorithm in all aspects. 
It is also worth noting that in the larger \emph{lehigh} map, using the travel time costs $C_{\text{max}}$, the advantage is higher in the maximum travel time than the average travel time.}

\begin{table}[ht] 
\centering
\caption{Performance of 
\changedSB{our proposed topological algorithm as compared to a shortest path algorithm}
in simulations for map ``cage\_1'', using two time-cost \& assignment probability computation models. 
Darker cyan indicates bigger advantage of the topological algorithm over the shortest-path one, while darker red indicates cases where it underperformed. \changedSB{Each cell shows the the average over 10 simulation runs with different initial conditions. See multimedia attachment for sample simulation runs.}} \label{tab_performance_comparison}
\begin{threeparttable}
\centering
\begin{tabular}{c|c|r|rrrr} 
\bottomrule
\multirow{2}{*}{Method} & \multirow{2}{*}{Feature} & \multirow{2}{*}{\begin{tabular}[c]{@{}c@{}}Ped.\\ \#\end{tabular}}& \multicolumn{4}{c}{Robot \#}\\
\cline{4-7}
& & & 5 & 10 & 15 & 20\\ 
\toprule
\multirow{21}{*}{\rotatebox[origin=c]{90}{\small \textbf{2-robot Model}, minimizing \textbf{Avg. Travel Time Cost}}} & \multirow{7}{*}{\rotatebox[origin=c]{90}{Avg. travel time}} & 0 & \cellcolor[HTML]{E4F7FB}95.27\% & \cellcolor[HTML]{BDF0F6}87.67\%  & \cellcolor[HTML]{99E9F1}80.52\% & \cellcolor[HTML]{80E4EE}75.44\% \\
&  & 5 & \cellcolor[HTML]{DCF6FA}93.78\% & \cellcolor[HTML]{B8EFF5}86.59\%  & \cellcolor[HTML]{A1EAF2}82.05\% & \cellcolor[HTML]{70E1EC}72.41\% \\
&  & 10 & \cellcolor[HTML]{E0F6FB}94.60\% & \cellcolor[HTML]{97E8F1}80.00\%  & \cellcolor[HTML]{91E7F0}78.87\% & \cellcolor[HTML]{6AE0EB}71.12\% \\
&  & 15 & \cellcolor[HTML]{CDF3F8}90.68\% & \cellcolor[HTML]{87E5EF}76.95\%  & \cellcolor[HTML]{54DCE8}66.80\% & \cellcolor[HTML]{35D6E4}60.63\% \\
&  & 20 & \cellcolor[HTML]{8FE7F0}78.51\% & \cellcolor[HTML]{FCE6EE}104.11\% & \cellcolor[HTML]{7AE3ED}74.36\% & \cellcolor[HTML]{71E1EC}72.50\% \\
&  & 25 & \cellcolor[HTML]{DEF6FA}94.08\% & \cellcolor[HTML]{FEA7BE}115.45\% & \cellcolor[HTML]{D9F5FA}93.12\% & \cellcolor[HTML]{70E1EC}72.28\% \\
&  & 30 & \cellcolor[HTML]{98E9F1}80.32\% & \cellcolor[HTML]{50DBE7}65.95\%  & \cellcolor[HTML]{1FD1E1}56.22\% & \cellcolor[HTML]{16D0E0}54.44\% \\
\hhline{~------}
& \multirow{7}{*}{\rotatebox[origin=c]{90}{Max. travel time}} & 0 & \cellcolor[HTML]{CDF3F8}90.72\% & \cellcolor[HTML]{9BE9F1}80.78\%  & \cellcolor[HTML]{76E2EC}73.41\% & \cellcolor[HTML]{57DCE8}67.27\% \\
&  & 5 & \cellcolor[HTML]{CCF2F8}90.60\% & \cellcolor[HTML]{A2EAF2}82.18\%  & \cellcolor[HTML]{7FE4EE}75.30\% & \cellcolor[HTML]{60DEEA}69.17\% \\
&  & 10 & \cellcolor[HTML]{CAF2F8}90.19\% & \cellcolor[HTML]{67DFEA}70.53\%  & \cellcolor[HTML]{7DE3ED}74.91\% & \cellcolor[HTML]{65DFEA}70.06\% \\
&  & 15 & \cellcolor[HTML]{AAECF4}83.84\% & \cellcolor[HTML]{78E2ED}73.83\%  & \cellcolor[HTML]{3BD7E5}61.88\% & \cellcolor[HTML]{2AD4E2}58.50\% \\
&  & 20 & \cellcolor[HTML]{88E5EF}77.07\% & \cellcolor[HTML]{FDBED0}111.23\% & \cellcolor[HTML]{85E5EF}76.51\% & \cellcolor[HTML]{6EE1EB}71.93\% \\
&  & 25 & \cellcolor[HTML]{F6FAFE}98.84\% & \cellcolor[HTML]{FF8DAA}128.70\% & \cellcolor[HTML]{EBF8FC}96.81\% & \cellcolor[HTML]{69E0EB}70.97\% \\
&  & 30 & \cellcolor[HTML]{A4EBF3}82.64\% & \cellcolor[HTML]{46D9E6}63.98\%  & \cellcolor[HTML]{00CCDD}48.89\% & \cellcolor[HTML]{10CFDF}53.23\% \\
\hhline{~------}
& \multirow{7}{*}{\rotatebox[origin=c]{90}{Collision (s)}} & 0  & \cellcolor[HTML]{FBFBFE}-0.01 & \cellcolor[HTML]{FAFBFE}-0.05 & \cellcolor[HTML]{F3FAFD}-0.35 & \cellcolor[HTML]{EFF9FD}-0.49 \\
&  & 5  & \cellcolor[HTML]{FBFBFE}-0.03 & \cellcolor[HTML]{F8FBFE}-0.13 & \cellcolor[HTML]{E5F7FB}-0.88 & \cellcolor[HTML]{D2F4F9}-1.63 \\
&  & 10 & \cellcolor[HTML]{FAFBFE}-0.05 & \cellcolor[HTML]{F9FBFE}-0.10 & \cellcolor[HTML]{DDF6FA}-1.22 & \cellcolor[HTML]{D5F4F9}-1.54 \\
&  & 15 & \cellcolor[HTML]{FAFBFE}-0.08 & \cellcolor[HTML]{F5FAFE}-0.27 & \cellcolor[HTML]{D4F4F9}-1.57 & \cellcolor[HTML]{BBEFF6}-2.55 \\
&  & 20 & \cellcolor[HTML]{F6FBFE}-0.21 & \cellcolor[HTML]{99E9F1}-3.92 & \cellcolor[HTML]{A6EBF3}-3.41 & \cellcolor[HTML]{79E3ED}-5.17 \\
&  & 25 & \cellcolor[HTML]{F7FBFE}-0.20 & \cellcolor[HTML]{81E4EE}-4.86 & \cellcolor[HTML]{7FE4EE}-4.95 & \cellcolor[HTML]{73E2EC}-5.41 \\
&  & 30 & \cellcolor[HTML]{F3FAFD}-0.34 & \cellcolor[HTML]{F7FBFE}-0.17 & \cellcolor[HTML]{DCF6FA}-1.24 & \cellcolor[HTML]{B4EEF5}-2.84 \\
\hline
\multirow{21}{*}{\rotatebox[origin=c]{90}{\small \textbf{Ensemble Model}, minimizing \textbf{Max. Travel Time Cost}}} & \multirow{7}{*}{\rotatebox[origin=c]{90}{Avg. travel time}} & 0  & \cellcolor[HTML]{FDBCCE}111.56\% & \cellcolor[HTML]{CAF2F8}90.25\%  & \cellcolor[HTML]{A1EAF2}82.04\% & \cellcolor[HTML]{80E4EE}75.55\% \\
&  & 5  & \cellcolor[HTML]{FEA8BF}115.29\% & \cellcolor[HTML]{CBF2F8}90.41\%  & \cellcolor[HTML]{97E8F1}79.99\% & \cellcolor[HTML]{6EE1EB}71.99\% \\
&  & 10 & \cellcolor[HTML]{FDCFDD}108.19\% & \cellcolor[HTML]{A8ECF3}83.36\%  & \cellcolor[HTML]{A6EBF3}82.94\% & \cellcolor[HTML]{63DFEA}69.79\% \\
&  & 15 & \cellcolor[HTML]{FBFBFE}99.85\%  & \cellcolor[HTML]{78E3ED}74.00\%  & \cellcolor[HTML]{65DFEA}70.07\% & \cellcolor[HTML]{3ED7E5}62.32\% \\
&  & 20 & \cellcolor[HTML]{C9F2F8}89.91\%  & \cellcolor[HTML]{F5FAFE}98.68\%  & \cellcolor[HTML]{72E1EC}72.78\% & \cellcolor[HTML]{61DEEA}69.40\% \\
&  & 25 & \cellcolor[HTML]{E2F7FB}94.89\%  & \cellcolor[HTML]{FDC7D6}109.67\% & \cellcolor[HTML]{BEF0F6}87.84\% & \cellcolor[HTML]{7BE3ED}74.45\% \\
&  & 30 & \cellcolor[HTML]{A1EAF2}82.13\%  & \cellcolor[HTML]{52DBE8}66.33\%  & \cellcolor[HTML]{30D5E3}59.58\% & \cellcolor[HTML]{19D0E0}55.01\% \\
\hhline{~------}
& \multirow{7}{*}{\rotatebox[origin=c]{90}{Max. travel time}} & 0  & \cellcolor[HTML]{FDC8D7}109.48\% & \cellcolor[HTML]{A8ECF3}83.52\%  & \cellcolor[HTML]{78E2ED}73.95\% & \cellcolor[HTML]{54DCE8}66.68\% \\
&  & 5  & \cellcolor[HTML]{FDB9CC}112.16\% & \cellcolor[HTML]{B1EDF4}85.17\%  & \cellcolor[HTML]{7AE3ED}74.40\% & \cellcolor[HTML]{47D9E6}64.25\% \\
&  & 10 & \cellcolor[HTML]{FCF9FD}100.58\% & \cellcolor[HTML]{81E4EE}75.71\%  & \cellcolor[HTML]{B0EDF4}85.01\% & \cellcolor[HTML]{4DDAE7}65.39\% \\
&  & 15 & \cellcolor[HTML]{C9F2F8}89.90\%  & \cellcolor[HTML]{62DEEA}69.52\%  & \cellcolor[HTML]{4ADAE7}64.88\% & \cellcolor[HTML]{3CD7E5}61.92\% \\
&  & 20 & \cellcolor[HTML]{EBF8FC}96.68\%  & \cellcolor[HTML]{FCE1EA}105.01\% & \cellcolor[HTML]{7EE4EE}75.10\% & \cellcolor[HTML]{5ADDE9}68.03\% \\
&  & 25 & \cellcolor[HTML]{F3FAFD}98.23\%  & \cellcolor[HTML]{FF8DAA}124.14\% & \cellcolor[HTML]{BBEFF6}87.29\% & \cellcolor[HTML]{72E1EC}72.70\% \\
&  & 30 & \cellcolor[HTML]{A6EBF3}82.95\%  & \cellcolor[HTML]{38D6E4}61.12\%  & \cellcolor[HTML]{0ECEDE}52.84\% & \cellcolor[HTML]{0BCEDE}52.27\% \\
\hhline{~------}
& \multirow{7}{*}{\rotatebox[origin=c]{90}{Collision (s)}} & 0  & \cellcolor[HTML]{FBFBFE}-0.01 & \cellcolor[HTML]{FAFBFE}-0.05 & \cellcolor[HTML]{F4FAFD}-0.31 & \cellcolor[HTML]{F3FAFD}-0.34 \\
&  & 5  & \cellcolor[HTML]{FAFBFE}-0.04 & \cellcolor[HTML]{F9FBFE}-0.08 & \cellcolor[HTML]{E2F7FB}-1.01 & \cellcolor[HTML]{D6F4F9}-1.50 \\
&  & 10 & \cellcolor[HTML]{FAFBFE}-0.05 & \cellcolor[HTML]{F8FBFE}-0.15 & \cellcolor[HTML]{DFF6FB}-1.13 & \cellcolor[HTML]{D8F5FA}-1.42 \\
&  & 15 & \cellcolor[HTML]{F9FBFE}-0.10 & \cellcolor[HTML]{F3FAFD}-0.33 & \cellcolor[HTML]{D7F4FA}-1.46 & \cellcolor[HTML]{C9F2F8}-2.02 \\
&  & 20 & \cellcolor[HTML]{F7FBFE}-0.19 & \cellcolor[HTML]{9AE9F1}-3.89 & \cellcolor[HTML]{A8ECF3}-3.30 & \cellcolor[HTML]{87E5EF}-4.62 \\
&  & 25 & \cellcolor[HTML]{F6FAFE}-0.21 & \cellcolor[HTML]{80E4EE}-4.91 & \cellcolor[HTML]{7BE3ED}-5.11 & \cellcolor[HTML]{6DE0EB}-5.67 \\
&  & 30 & \cellcolor[HTML]{F4FAFD}-0.30 & \cellcolor[HTML]{F0F9FD}-0.45 & \cellcolor[HTML]{E2F7FB}-1.00 & \cellcolor[HTML]{A1EAF2}-3.60 \\
\toprule
\end{tabular}
\begin{tablenotes}
\item[*] All methods use $a=0.1625$, $b=0.04548$.
\item \% values: $\frac{\text{time taken in topological algorithm}}{\text{time taken in shortest path algorithm}}\times 100\%$
\item Collision numbers: {(colliding duration per robot in topological algorithm) - (colliding duration per robot in shortest-path algorithm)}
\end{tablenotes} 
\end{threeparttable}
\end{table}

\begin{table}
\caption{Performance comparisons in the other two maps.
}
\label{tab_performance_comparison_2}
\centering
\begin{tabular}{c|c|r|rrrr}
\bottomrule
\multirow{2}{*}{Method} & \multirow{2}{*}{Feature} & \multirow{2}{*}{Ped. \# 
}& \multicolumn{4}{c}{Robot \#}\\
\cline{4-7}
& & & 5 & 10 & 15 & 20\\ 
\toprule
\multicolumn{7}{c}{Map: cage\_2} \\
\hline
\multirow{15}{*}{\rotatebox[origin=c]{90}{\textbf{$2$-robot Model}, \textbf{Avg. Time Cost}}} & \multirow{5}{*}{\rotatebox[origin=c]{90}{Avg (\%)}} & 0 & \cellcolor[HTML]{E7F8FC}95.85\% & \cellcolor[HTML]{CBF2F8}90.29\% & \cellcolor[HTML]{AFEDF4}84.85\% & \cellcolor[HTML]{9BE9F1}80.81\% \\
&  & 5 & \cellcolor[HTML]{B8EFF5}86.65\% & \cellcolor[HTML]{94E8F0}79.37\% & \cellcolor[HTML]{83E4EE}76.02\% & \cellcolor[HTML]{5CDDE9}68.26\% \\
&  & 10 & \cellcolor[HTML]{B5EEF5}85.97\% & \cellcolor[HTML]{62DEEA}69.49\% & \cellcolor[HTML]{65DFEA}70.12\% & \cellcolor[HTML]{39D6E4}61.40\% \\
&  & 15 & \cellcolor[HTML]{82E4EE}75.84\% & \cellcolor[HTML]{78E2ED}73.91\% & \cellcolor[HTML]{2AD4E2}58.37\% & \cellcolor[HTML]{24D2E1}57.25\% \\
&  & 20 & \cellcolor[HTML]{87E5EF}76.86\% & \cellcolor[HTML]{3BD7E5}61.80\% & \cellcolor[HTML]{08CDDE}51.63\% & \cellcolor[HTML]{08CDDE}51.70\% \\
\hhline{~------}
& \multirow{5}{*}{\rotatebox[origin=c]{90}{Max (\%)}} & 0 & \cellcolor[HTML]{D9F5FA}93.18\% & \cellcolor[HTML]{B3EEF5}85.70\% & \cellcolor[HTML]{8FE7F0}78.55\% & \cellcolor[HTML]{7CE3ED}74.66\% \\
&  & 5 & \cellcolor[HTML]{B5EEF5}86.11\% & \cellcolor[HTML]{95E8F1}79.69\% & \cellcolor[HTML]{64DFEA}69.92\% & \cellcolor[HTML]{47D9E6}64.10\% \\
&  & 10 & \cellcolor[HTML]{ADEDF4}84.49\% & \cellcolor[HTML]{51DBE7}66.14\% & \cellcolor[HTML]{62DEEA}69.57\% & \cellcolor[HTML]{3ED7E5}62.41\% \\
&  & 15 & \cellcolor[HTML]{99E9F1}80.55\% & \cellcolor[HTML]{84E5EE}76.35\% & \cellcolor[HTML]{27D3E2}57.79\% & \cellcolor[HTML]{04CCDD}50.84\% \\
&  & 20 & \cellcolor[HTML]{86E5EF}76.73\% & \cellcolor[HTML]{2AD4E2}58.49\% & \cellcolor[HTML]{0FCFDF}53.16\% & \cellcolor[HTML]{03CCDD}50.63\% \\
\hhline{~------}
& \multirow{5}{*}{\rotatebox[origin=c]{90}{Collision (s)}} & 0  & \cellcolor[HTML]{FAFBFE}-0.04 & \cellcolor[HTML]{FAFBFE}-0.05 & \cellcolor[HTML]{FCE3EC}0.23  & \cellcolor[HTML]{FDCFDD}0.41  \\
&  & 5  & \cellcolor[HTML]{FBFBFE}-0.02 & \cellcolor[HTML]{F8FBFE}-0.13 & \cellcolor[HTML]{FAFBFE}-0.05 & \cellcolor[HTML]{FDC7D6}0.48  \\
&  & 10 & \cellcolor[HTML]{FBFBFE}-0.01 & \cellcolor[HTML]{F8FBFE}-0.13 & \cellcolor[HTML]{FCDEE8}0.27  & \cellcolor[HTML]{F4FAFE}-0.28 \\
&  & 15 & \cellcolor[HTML]{F7FBFE}-0.18 & \cellcolor[HTML]{F6FAFE}-0.23 & \cellcolor[HTML]{FBFBFE}-0.02 & \cellcolor[HTML]{88E6EF}-4.58 \\
&  & 20 & \cellcolor[HTML]{F7FBFE}-0.18 & \cellcolor[HTML]{FCE6EE}0.21  & \cellcolor[HTML]{C4F1F7}-2.20 & \cellcolor[HTML]{4CDAE7}-6.96 \\
\toprule
\end{tabular}
\begin{tabular}{c|c|r|rrrrr}
\multicolumn{8}{c}{Map: lehigh} \\
\hline
Method & Feature & Ped. \# & 10 & 20 & 30 & 40 & 50\\ 
\hline
\multirow{18}{*}{\rotatebox[origin=c]{90}{\textbf{Ensemble Model}, \textbf{Max. Time Cost}}} & \multirow{6}{*}{\rotatebox[origin=c]{90}{Avg (\%)}} & 0 & \cellcolor[HTML]{FCE5EE}104.20\% & \cellcolor[HTML]{F1FAFD}97.99\% & \cellcolor[HTML]{CFF3F8}91.13\% & \cellcolor[HTML]{CFF3F9}91.27\% & \cellcolor[HTML]{D2F4F9}91.77\% \\
&  & 20  & \cellcolor[HTML]{FCF6FA}101.19\% & \cellcolor[HTML]{DCF5FA}93.67\% & \cellcolor[HTML]{D1F3F9}91.53\% & \cellcolor[HTML]{A8ECF3}83.36\% & \cellcolor[HTML]{A0EAF2}81.93\% \\
&  & 40  & \cellcolor[HTML]{F2FAFD}98.09\%  & \cellcolor[HTML]{F0F9FD}97.74\% & \cellcolor[HTML]{C9F2F8}89.98\% & \cellcolor[HTML]{B0EDF4}84.98\% & \cellcolor[HTML]{97E8F1}80.06\% \\
&  & 60  & \cellcolor[HTML]{E5F7FC}95.61\%  & \cellcolor[HTML]{CFF3F9}91.20\% & \cellcolor[HTML]{A2EAF2}82.21\% & \cellcolor[HTML]{88E5EF}77.04\% & \cellcolor[HTML]{71E1EC}72.55\% \\
&  & 80  & \cellcolor[HTML]{FE8EAB}119.92\% & \cellcolor[HTML]{D5F4F9}92.39\% & \cellcolor[HTML]{9AE9F1}80.57\% & \cellcolor[HTML]{77E2ED}73.80\% & \cellcolor[HTML]{65DFEA}70.18\% \\
&  & 100 & \cellcolor[HTML]{F8FBFE}99.30\%  & \cellcolor[HTML]{B0EDF4}85.08\% & \cellcolor[HTML]{81E4EE}75.62\% & \cellcolor[HTML]{73E2EC}72.94\% & \cellcolor[HTML]{6BE0EB}71.27\% \\
\hhline{~-------}
& \multirow{6}{*}{\rotatebox[origin=c]{90}{Max (\%)}} & 0 & \cellcolor[HTML]{FDC3D4}110.32\% & \cellcolor[HTML]{E5F7FC}95.62\%  & \cellcolor[HTML]{B5EEF5}85.99\% & \cellcolor[HTML]{B4EEF5}85.78\% & \cellcolor[HTML]{C2F0F7}88.53\% \\
&  & 20  & \cellcolor[HTML]{FDD6E2}106.89\% & \cellcolor[HTML]{E5F7FB}95.49\% & \cellcolor[HTML]{C4F1F7}89.06\% & \cellcolor[HTML]{99E9F1}80.53\% & \cellcolor[HTML]{97E8F1}80.11\% \\
&  & 40  & \cellcolor[HTML]{FCDDE8}105.61\% & \cellcolor[HTML]{FCDFE9}105.41\% & \cellcolor[HTML]{C9F2F8}89.96\% & \cellcolor[HTML]{A0EAF2}81.80\% & \cellcolor[HTML]{87E5EF}76.91\% \\
&  & 60  & \cellcolor[HTML]{F5FAFE}98.76\%  & \cellcolor[HTML]{C8F2F8}89.79\%  & \cellcolor[HTML]{87E5EF}76.87\% & \cellcolor[HTML]{62DEEA}69.52\% & \cellcolor[HTML]{55DCE8}66.96\% \\
&  & 80  & \cellcolor[HTML]{FF8DAA}132.74\% & \cellcolor[HTML]{C9F2F8}89.90\%  & \cellcolor[HTML]{6FE1EC}72.13\% & \cellcolor[HTML]{52DBE8}66.38\% & \cellcolor[HTML]{35D6E4}60.62\% \\
&  & 100 & \cellcolor[HTML]{F7FBFE}99.15\%  & \cellcolor[HTML]{AFEDF4}84.85\% & \cellcolor[HTML]{5DDDE9}68.58\% & \cellcolor[HTML]{56DCE8}67.18\% & \cellcolor[HTML]{37D6E4}60.94\% \\
\hhline{~-------}
& \multirow{6}{*}{\rotatebox[origin=c]{90}{Collision (s)}} & 0 & \cellcolor[HTML]{FCFCFF}0.00  & \cellcolor[HTML]{FCF3F8}0.02  & \cellcolor[HTML]{FCFAFE}0.00  & \cellcolor[HTML]{FBFBFE}-0.01 & \cellcolor[HTML]{00CCDD}-3.04 \\
&  & 20  & \cellcolor[HTML]{FBFBFE}0.00 & \cellcolor[HTML]{FCFCFF}0.00  & \cellcolor[HTML]{CEF3F8}-0.55 & \cellcolor[HTML]{FAFBFE}-0.02 & \cellcolor[HTML]{FAFBFE}-0.02 \\
&  & 40  & \cellcolor[HTML]{FBFBFE}0.00  & \cellcolor[HTML]{E2F7FB}-0.30 & \cellcolor[HTML]{FF8DAA}0.25  & \cellcolor[HTML]{F7FBFE}-0.05 & \cellcolor[HTML]{F8FBFE}-0.05 \\
&  & 60  & \cellcolor[HTML]{FBFBFE}0.00  & \cellcolor[HTML]{FAFBFE}-0.02 & \cellcolor[HTML]{FBFBFE}-0.01 & \cellcolor[HTML]{F4FAFD}-0.09 & \cellcolor[HTML]{F4FAFE}-0.09 \\
&  & 80  & \cellcolor[HTML]{8AE6EF}-1.38 & \cellcolor[HTML]{FCF7FB}0.01  & \cellcolor[HTML]{F4FAFE}-0.09 & \cellcolor[HTML]{FCE2EB}0.06  & \cellcolor[HTML]{B2EDF5}-0.89 \\
&  & 100 & \cellcolor[HTML]{FBFBFE}-0.01 & \cellcolor[HTML]{FCEAF2}0.04  & \cellcolor[HTML]{F0F9FD}-0.14 & \cellcolor[HTML]{D6F4F9}-0.45 & \cellcolor[HTML]{6DE0EB}-1.72 \\
\toprule 
\end{tabular} 
\end{table}

\clearpage

\vspace{0.5em}\noindent
\textbf{ii.} \emph{a.} 
\changedXW{To verify that the traffic density map (as described in Section~\ref{sec:traffic-density}) in our proposed algorithm makes a difference in performance, we used a uniform traffic density map to run 10 simulations for comparison (while keeping the rest of the algorithm the same). \highlight{The results in Table \ref{table:density_map} suggests that the algorithm without an appropriately computed traffic density map underperforms. The role of the traffic density map in predicting the traffic for computation of the path assignment probabilities is statistically significant.}

\vspace{0.25em}\noindent
\textbf{ii.} \emph{b.} 
To demonstrate the effectiveness of our proposed path assignment probability computation, we compare the performance of our \textbf{ensemble model} for probability computation with uniform path assignment probabilities.
In particular, in map \emph{o2} (Figure~\ref{pic_o2}), the ensemble model minimizing the max. travel time cost gives path assignment probabilities of $0.34$, $0.25$, $0.26$ and $0.14$ for the $4$ topological classes in the environment, which is sufficiently different from uniform probability of $0.25$ for all the classes.
This makes this environment a prime candidate for comparison of our method with the uniform path assignment probability method.
\highlight{Table \ref{table:uniform} shows that the path assignment probabilities computed using the proposed algorithm leads to an improved performance with a variety of robot and pedestrian setups, when compared against the uniform path assignment probability method.}

\begin{table}
\caption{Performance comparisons between simulations using traffic density map, $\rho$, and not using it (\changedF{\emph{i.e.}, effectively uniform traffic density} by setting traffic density term 0) on Map ``\emph{cage\_1}'', both with 6 classes, 10 robots, 10 pedestrians, and using the \textbf{2-robot model} minimizing \textbf{average travel time cost}. \changedF{Each cell shows the the average over 10 simulation
runs with different initial conditions.}}
\label{table:density_map}
\centering
\begin{tabular}{llllll}
\bottomrule
\multicolumn{2}{c|}{Using TDM (s)}          & \multicolumn{2}{c|}{Not Using TDM (s)}      & \multicolumn{2}{c}{Comparison (\%)} \\ \hline
Avg   & \multicolumn{1}{l|}{Max}    & Avg   & \multicolumn{1}{l|}{Max}    & Avg            & Max           \\ \hline
62.14 & \multicolumn{1}{l|}{124.83} & 67.14 & \multicolumn{1}{l|}{143.45} & \cellcolor[HTML]{D6F4F9}92.57\% & \cellcolor[HTML]{BAEFF6}87.02\% \\ \bottomrule
\end{tabular}
\end{table}

\begin{table}
\caption{Performance comparisons between simulations using path assignment probabilities computed using the \textbf{ensemble model} minimizing \textbf{max. travel time cost}, and using an uniform path assignment probability over 4 topological classes on map ``\emph{o2}'' (Figure \ref{pic_o2}). \changedF{Each cell shows the the average over 10 simulation
runs with different initial conditions.}}
\label{table:uniform}
\centering
\begin{tabular}{ll|ll|ll|ll}
\bottomrule
\multicolumn{1}{c}{\multirow{2}{*}{Robot \#}} & \multirow{2}{*}{Ped. \#} & \multicolumn{2}{c|}{Ensemble Model (s)} & \multicolumn{2}{c|}{Uniform Path Assignment (s)} & \multicolumn{2}{c}{Comparison (\%)} \\ \cline{3-8} 
\multicolumn{1}{c}{}                          &                          & Avg           & Max           & Avg           & Max          & Avg            & Max           \\ \hline
5                                             & 5                        & 43.01         & 61.81         & 49.61         & 72.94       & \cellcolor[HTML]{B8EFF5}86.70\% & \cellcolor[HTML]{AFEDF4}84.74\%   \\
10                                            & 10                       & 75.66         & 142.06        & 91.16         & 150.30      & \cellcolor[HTML]{A6EBF3}82.99\% & \cellcolor[HTML]{E0F6FB}94.51\%   \\
15                                            & 15                       & 106.53        & 196.84        & 120.04        & 237.92      & \cellcolor[HTML]{C3F1F7}88.74\% & \cellcolor[HTML]{A4EBF3}82.73\%   \\ \bottomrule
\end{tabular}
\end{table}

\begin{table}
\caption{Performance comparisons for multi-group scenarios (3 groups, 5 robots in each group) between the topological method and the shortest-path method in map ``\emph{group}'' (Figure \ref{pic_group}). The topological method uses \textbf{ensemble model} minimizing \textbf{max. time time cost} with 10 classes. \changedF{Each cell shows the the average over 10 simulation
runs with different initial conditions.}}
\label{table:group}
\centering
\begin{tabular}{ll|ll|ll}
\bottomrule
\multicolumn{2}{c|}{Topological (s)} & \multicolumn{2}{c|}{Shortest (s)} & \multicolumn{2}{c}{Comparison (\%)} \\ \hline
Avg               & Max              & Avg             & Max             & Avg              & Max              \\ \hline
59.7901           & 107.439          & 65.0912         & 115.988         & \cellcolor[HTML]{D2F4F9}91.86\% & \cellcolor[HTML]{D6F4F9}92.63\%   \\ \bottomrule
\end{tabular}
\end{table}

\vspace{0.5em}\noindent
\textbf{iii.}
Although most of our simulations focus on single-group scenarios, we have applied our algorithm to a multi-group case as well. In map ``\emph{group}'' (Figure~\ref{pic_group}), three groups of robots starting of from different locations try to reach their respective goals cross the map. \highlight{As suggested by Table \ref{table:group}, our method, with 10 classes for robots to choose from performs better than the shortest-path algorithm in a statistically significant manner.}
}

}

\vspace{1em}\noindent
\textbf{Real Robot Experiments:}
Real-robot experiments were run only on the \emph{cage\_2} map with 9 robots and 9 pedestrians (Fig \ref{fig:experiment_setup}).
The results from each of the 10 runs are summarized in Table \ref{table:experiment_data}
and demonstrate similar advantages as seen in simulations.
\changedF{Complete video of the simulation can be found in the multimedia attachment.}
\revisedXW{It is to be noted that the robots used in the experiments are omni-directional and are is allowed to stop and/or move back in order to avoid immediate collision with other robots or pedestrians that it can sense in its immediate neighborhood. Since the robots follow paths computed by A* planer on a discrete grid representation of the environment, it needs to follow a piece-wise path that is restricted to the graph.}
}}

\begin{SCfigure}[2][h] \vspace{1em}  
\centering
\includegraphics[width=0.5\columnwidth]{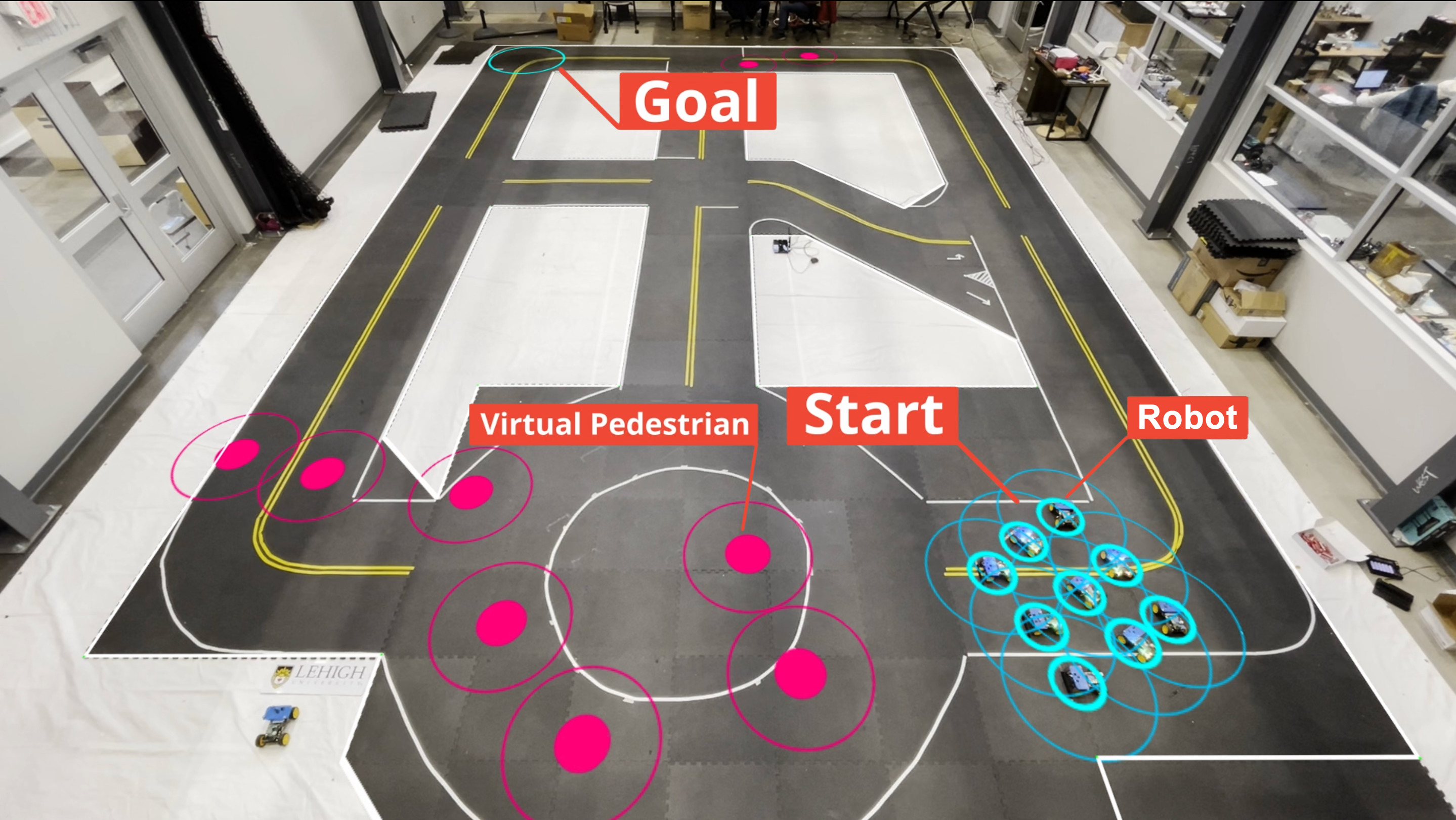}
\caption{Real robot experiment in map ``\emph{cage\_2}''. Virtual (augmented reality) pedestrians are used in the experiment (see multimedia attachment).}
\label{fig:experiment_setup}
\end{SCfigure} 

\begin{table}[ht]
\caption{Statistic from individual real-robot experiments on map ``\emph{cage\_2}''. All runs are with 9 robots and 9 virtual pedestrians, using the 2-robot model minimizing average travel time cost, $C_{\text{avg}}$.
}
\label{table:experiment_data}
\centering
\begin{threeparttable}

\begin{tabular}{rrrrrrr|rrr}
\bottomrule
\multicolumn{1}{r|}{}  & \multicolumn{3}{c|}{Topological (s)} & \multicolumn{3}{c|}{Shortest (s)} & Avg & Max & Coll\\
\cline{2-7}
\multicolumn{1}{r|}{\multirow{-2}{*}{Expt.\#}} & Avg & Max & \multicolumn{1}{r|}{Coll} & Avg & Max & Coll& (ratio) & (ratio) & (diff.) \\ \hline
\multicolumn{1}{l|}{1}  & 56.82 & 86.20  & \multicolumn{1}{l|}{0.00} & 86.93  & 143.14 & 0.02 & \cellcolor[HTML]{4DDAE7}65.37\% & \cellcolor[HTML]{33D5E3}60.22\%  & \cellcolor[HTML]{00CCDD}-0.02 \\
\multicolumn{1}{l|}{2}  & 70.04 & 120.27 & \multicolumn{1}{l|}{0.20} & 72.76  & 116.96 & 0.19 & \cellcolor[HTML]{E9F8FC}96.26\% & \cellcolor[HTML]{FCEDF3}102.83\% & \cellcolor[HTML]{FF8DAA}0.01  \\
\multicolumn{1}{l|}{3\tnote{*}}  & 76.32 & 94.45  & \multicolumn{1}{l|}{0.17} & 111.68 & 162.72 & 0.70 & \cellcolor[HTML]{5CDDE9}68.34\% & \cellcolor[HTML]{28D3E2}58.04\%  & \cellcolor[HTML]{35D6E4}-0.53 \\
\multicolumn{1}{l|}{4}  & 60.69 & 116.82 & \multicolumn{1}{l|}{0.00} & 106.08 & 138.78 & 0.68 & \cellcolor[HTML]{24D2E1}57.21\% & \cellcolor[HTML]{ACECF4}84.18\%  & \cellcolor[HTML]{00CCDD}-0.68 \\
\multicolumn{1}{l|}{5}  & 57.32 & 85.92  & \multicolumn{1}{l|}{0.04} & 79.75  & 96.77  & 0.37 & \cellcolor[HTML]{6EE0EB}71.87\% & \cellcolor[HTML]{C3F1F7}88.79\%  & \cellcolor[HTML]{84E5EE}-0.32 \\
\multicolumn{1}{l|}{6}  & 57.69 & 99.05  & \multicolumn{1}{l|}{0.43} & 98.10  & 201.87 & 0.44 & \cellcolor[HTML]{2CD4E2}58.81\% & \cellcolor[HTML]{00CCDD}49.07\%  & \cellcolor[HTML]{F7FBFE}-0.01 \\
\multicolumn{1}{l|}{7}  & 46.83 & 55.03  & \multicolumn{1}{l|}{0.34} & 90.86  & 132.60 & 0.61 & \cellcolor[HTML]{07CDDE}51.55\% & \cellcolor[HTML]{00CCDD}41.50\%  & \cellcolor[HTML]{98E9F1}-0.27 \\
\multicolumn{1}{l|}{8}  & 51.59 & 82.63  & \multicolumn{1}{l|}{0.07} & 83.15  & 105.48 & 0.43 & \cellcolor[HTML]{3CD7E5}62.04\% & \cellcolor[HTML]{8EE7F0}78.34\%  & \cellcolor[HTML]{73E2EC}-0.37 \\
\multicolumn{1}{l|}{9}  & 63.45 & 81.31  & \multicolumn{1}{l|}{0.20} & 86.74  & 129.86 & 0.54 & \cellcolor[HTML]{74E2EC}73.15\% & \cellcolor[HTML]{3FD8E5}62.61\%  & \cellcolor[HTML]{7BE3ED}-0.34 \\
\multicolumn{1}{l|}{10} & 65.42 & 140.28 & \multicolumn{1}{l|}{0.29} & 130.53 & 169.16 & 0.67 & \cellcolor[HTML]{00CCDD}50.12\% & \cellcolor[HTML]{A5EBF3}82.92\%  & \cellcolor[HTML]{6FE1EC}-0.38 \\ \hline
& & & & & & Avg & \cellcolor[HTML]{4DDAE7}65.47\% & \cellcolor[HTML]{69E0EB}70.85\%  & \cellcolor[HTML]{00CCDD}-0.29  \\
\toprule
\end{tabular}
\begin{tablenotes}
\item[*] See this run in the supplementary video. 
\item[**] All experiments use $a= 0.0001$, $b= 0.7222$.
\end{tablenotes} 
\end{threeparttable}
\end{table}

\clearpage

\revisedXW{
\vspace{0.5em}\noindent
\textbf{Discussions, Limitations and Future Directions:}
As demonstrated in the simulations and the experiments, our inter-agent coordination-free method distributing the agents across different routes outperforms other coordination-free methods with respect to the overall travel time. The advantage is particularly amplified when there are a large number of robots in the environment.
Compared to other MAPF methods, our method does not require real-time traffic information of both in- and out-of-system agents. The probability computation time using one of the simplified models for each robot is constant irrespective of the number of agents in the environment, and hence the computation complexity per robot does not increase with the number of agents, making our algorithm suitable for an environment with a large number of agents.

Despite the demonstrated effectiveness of the proposed method, we recognize several limitations of the current method, which 
warrant further future investigations:
\begin{enumerate}[i]
\item We use a priory traffic density estimation in the cost function for computing the reference paths for each robot. Currently this traffic density is computed synthetically purely based on the structure/map of the environment. In a real urban environment, historic traffic date can provide more accurate traffic density. In future we plan to test the proposed method with real traffic data collected from the department of transportation for constructing the traffic density map.
\item It is assumed that each robot has a priori knowledge of the environment (a map) and also knows its own location (using a global localization system such as GPS). Without one or both  of these information, each robot will also need to simultaneously create a map of the environment and/or localize itself in the environment. This will require each robot to use a SLAM (Simultaneous Localization and Mapping)~\cite{jia2019survey} module on top of our coordination-free planning algorithm, which we will do in the future.
\item We have used A* search algorithm in a discrete graph representation of the environment for computing the cost-minimizing paths restricted to the graph. This results in the individual robots following paths that are piecewise linear, but may have sudden turns because of the discrete nature of the graph. In future we will use any-angle planning algorithms such Theta*~\cite{daniel2010theta} or S*~\cite{bhattacharya2019towards} to generate smoother paths for the individual robots to follow.
%
\end{enumerate}
}

 \clearpage
 
\bmhead{Acknowledgments}

This material is based upon work supported by the National Science Foundation under Grant No. CCF-2144246.

\bibliography{references}

\end{document}